\theoremstyle{definition}
\newtheorem{definition}{Definition}[section]
\theoremstyle{plain}
\newtheorem{theorem}{Theorem}[section]
\newtheorem{lemma}{Lemma}[section]
\newtheorem{corollary}{Corollary}[theorem]
\newcommand{\CL}{\mathrm{CL}}
\newcommand{\SCL}{\mathrm{SCL}}
\newcommand{\RL}{\mathrm{RL}}
\newcommand{\SL}{\mathrm{SL}}
\newcommand{\nn}{\hat{\mathbbm{1}}}
\newcommand{\LSTM}{\mathrm{LSTM}}
\newcommand{\GRU}{\mathrm{GRU}}
\newcommand{\CNN}{\mathrm{CNN}}
\newcommand{\SRN}{\mathrm{SRN}}
\newcommand{\confs}{\textarc{m}}
\DeclarePairedDelimiter\abs{\lvert}{\rvert}%
\DeclarePairedDelimiter\norm{\lVert}{\rVert}%
\let\oldabs\abs
\def\abs{\@ifstar{\oldabs}{\oldabs*}}
\let\oldnorm\norm
\def\norm{\@ifstar{\oldnorm}{\oldnorm*}}
\newcommand\round[1]{\left[#1\right]}
\DeclareMathOperator{\softmax}{softmax}
\DeclareMathOperator{\ReLU}{ReLU}
\DeclareMathOperator{\attn}{attn}
\newcommand{\limN}{\lim_{N \rightarrow \infty}}
\newcommand{\bv}[1]{\mathbf{#1}}
\newcommand{\bm}[1]{\mathbf{#1}}
\newcommand{\autorefp}[1]{(\autoref{#1})}
\title{Sequential Neural Networks as Automata}
\author{William Merrill\thanks{\; Work completed while the author was at Yale University.} \\
        Yale University, New Haven, CT, USA \\
        Allen Institute for Artificial Intelligence, Seattle, WA, USA \\
        \texttt{william.merrill@yale.edu}}
\date{\today}
\begin{document}

\maketitle
\begin{abstract}
This work attempts to explain the types of computation that neural networks can perform by relating them to automata. We first define what it means for a real-time network with bounded precision to accept a language. A measure of network memory follows from this definition. We then characterize the classes of languages acceptable by various recurrent networks, attention, and convolutional networks. We find that LSTMs function like counter machines and relate convolutional networks to the subregular hierarchy. Overall, this work attempts to increase our understanding and ability to interpret neural networks through the lens of theory. These theoretical insights help explain neural computation, as well as the relationship between neural networks and natural language grammar.
\end{abstract}

\section{Introduction}

In recent years, neural networks have achieved tremendous success on a variety of natural language processing (NLP) tasks. Neural networks employ continuous distributed representations of linguistic data, which contrast with classical discrete methods. While neural methods work well, one of the downsides of the distributed representations that they utilize is interpretability. It is hard to tell what kinds of computation a model is capable of, and when a model is working, it is hard to tell what it is doing.

This work aims to address such issues of interpretability by relating sequential neural networks to forms of computation that are more well understood. In theoretical computer science, the computational capacities of many different kinds of automata formalisms are clearly established. Moreover, the Chomsky hierarchy links natural language to such automata-theoretic languages \citep{chomsky1956three}. Thus, relating neural networks to automata both yields insight into what general forms of computation such models can perform, as well as how such computation relates to natural language grammar.

Recent work has begun to investigate what kinds of automata-theoretic computations various types of neural networks can simulate. \citet{weiss2018} propose a connection between long short-term memory networks (LSTMs) and counter automata. They provide a construction by which the LSTM can simulate a simplified variant of a counter automaton. They also demonstrate that LSTMs can learn to increment and decrement their cell state as counters in practice. \citet{peng2018rational}, on the other hand, describe a connection between the gating mechanisms of several recurrent neural network (RNN) architectures and weighted finite-state acceptors.

This paper follows \citet{weiss2018} by analyzing the expressiveness of neural network acceptors under asymptotic conditions. We formalize asymptotic language acceptance, as well as an associated notion of network memory. 
We use this theory to derive computation upper bounds and automata-theoretic characterizations for several different kinds of recurrent neural networks \autorefp{section:rnns}, as well as other architectural variants like attention \autorefp{section:attention} and convolutional networks (CNNs) \autorefp{section:cnns}. This leads to a fairly complete automata-theoretic characterization of sequential neural networks.

In \autoref{section:experiments}, we report empirical results investigating how well these asymptotic predictions describe networks with continuous activations learned by gradient descent. In some cases, networks behave according to the theoretical predictions, but we also find cases where there is gap between the asymptotic characterization and actual network behavior.

Still, discretizing neural networks using an asymptotic analysis builds intuition about how the network computes. Thus, this work provides insight about the types of computations that sequential neural networks can perform through the lens of formal language theory. In so doing, we can also compare the notions of grammar expressible by neural networks to formal models that have been proposed for natural language grammar.

\section{Introducing the Asymptotic Analysis} \label{asymptotic_analysis}

To investigate the capacities of different neural network architectures, we need to first define what it means for a neural network to accept a language. There are a variety of ways to formalize language acceptance, and changes to this definition lead to dramatically different characterizations.

In their analysis of RNN expressiveness, \citet{siegelmann1992turing} allow RNNs to perform an unbounded number of recurrent steps even after the input has been consumed. Furthermore, they assume that the hidden units of the network can have arbitrarily fine-grained precision. Under this very general definition of language acceptance, \citet{siegelmann1992turing} found that even a simple recurrent network (SRN) can simulate a Turing machine.

We want to impose the following constraints on neural network computation, which are more realistic to how networks are trained in practice \citep{weiss2018}:

\begin{enumerate}
    \item \textit{Real-time}: The network performs one iteration of computation per input symbol.
    \item \textit{Bounded precision}: The value of each cell in the network is representable by $O(\log n)$ bits on sequences of length $n$.
\end{enumerate}

Informally, a \textit{neural sequence acceptor} is a network which reads a variable-length sequence of characters and returns the probability that the input sequence is a valid sentence in some formal language. More precisely, we can write:

\begin{definition}[Neural sequence acceptor]
Let $\bm X$ be a matrix representation of a sentence where each row is a one-hot vector over an alphabet $\Sigma$. A neural sequence acceptor $\nn$ is a family of functions parameterized by weights $\theta$. For each $\theta$ and $\bm X$, the function $\nn^\theta$ takes the form
\begin{equation*}
    \nn^\theta : \bm X \mapsto p \in (0, 1) .
\end{equation*}
\end{definition}

\noindent In this definition, $\nn$ corresponds to a general architecture like an LSTM, whereas $\nn^\theta$ represents a specific network, such as an LSTM with weights that have been learned from data.

In order to get an acceptance decision from this kind of network, we will consider what happens as the magnitude of its parameters gets very large. Under these asymptotic conditions, the internal connections of the network approach a discrete computation graph, and the probabilistic output approaches the indicator function of some language \autorefp{fig:acceptance_example}.

\begin{figure} 
    \centering
    \includegraphics[scale=.2]{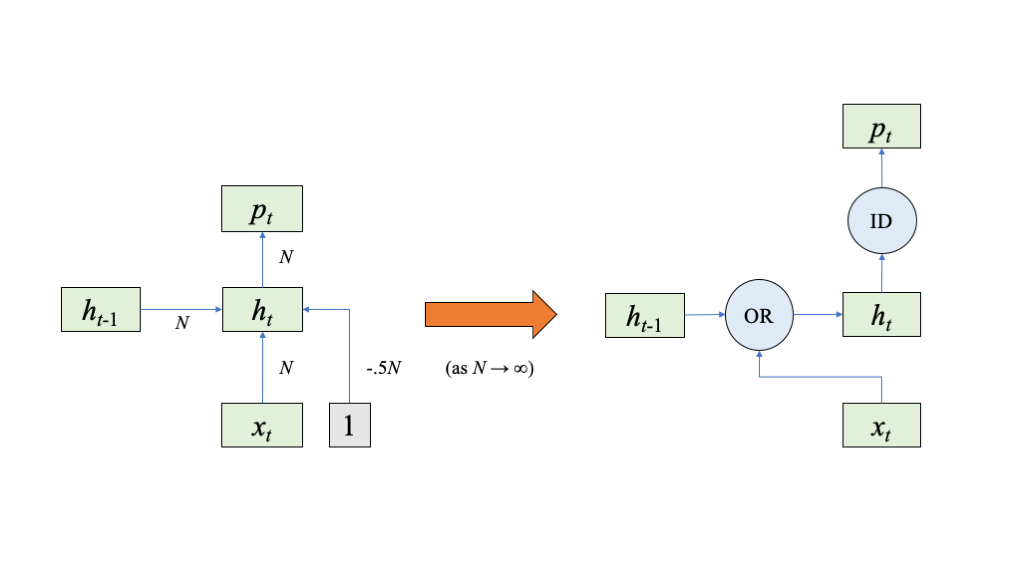}
    \caption{With sigmoid activations, the network on the left accepts a sequence of bits if and only if $x_t = 1$ for some $t$. On the right is the discrete computation graph that the network approaches asymptotically.}
    \label{fig:acceptance_example}
\end{figure}

\begin{definition}[Asymptotic acceptance]
\label{def:asymptotic_acceptance}
Let $L$ be a language with indicator function $\mathbbm{1}_L$. A neural sequence acceptor $\nn$ with weights $\theta$ asymptotically accepts $L$ if
\begin{equation*}
    \limN \nn^{N\theta} = \mathbbm{1}_L.
\end{equation*}
\end{definition}

\noindent Note that the limit of $\nn^{N\theta}$ represents the function that $\nn^{N\theta}$ converges to pointwise.\footnote{\url{https://en.wikipedia.org/wiki/Pointwise_convergence}}

Discretizing the network in this way lets us analyze it as an automaton. We can also view this discretization as a way of bounding the precision that each unit in the network can encode, since it is forced to act as a discrete unit instead of a continuous value. This prevents complex fractal representations that rely on infinite precision. We will see later that, for every architecture considered, this definition ensures that the value of every unit in the network is representable in $O(\log n)$ bits on sequences of length $n$.

It is important to note that real neural networks can learn strategies not allowed by the asymptotic definition. Thus, this way of analyzing neural networks is not completely faithful to their practical usage. In \autoref{section:experiments}, we discuss empirical studies investigating how trained networks compare to the asymptotic predictions. While we find evidence of networks learning behavior that is not asymptotically stable, adding noise to the network during training seems to make it more difficult for the network to learn non-asymptotic strategies.

Consider a neural network that asymptotically accepts some language. For any given length, we can pick weights for the network such that it will correctly decide strings shorter than that length (\autoref{thm:arbitrary_approximation}).

Analyzing a network's asymptotic behavior also gives us a notion of the network's memory. \citet{weiss2018} illustrate how the LSTM's additive cell update gives it more effective memory than the squashed state of an SRN or GRU for solving counting tasks. We generalize this concept of memory capacity as \textit{state complexity}. Informally, the state complexity of a node within a network represents the number of values that the node can achieve asymptotically as a function of the sequence length $n$. For example, the LSTM cell state will have $O(n^k)$ state complexity (\autoref{thm:lstm_memory_bound}), whereas the state of other recurrent networks has $O(1)$ (\autoref{thm:SRN_memory_bound}).

State complexity applies to a \textit{hidden state} sequence, which we can define as follows:

\begin{definition}[Hidden state] \label{def:hidden_state}
For any sentence $\bm X$, let $n$ be the length of $\bm X$. For $1 \leq t \leq n$, the $k$-length hidden state $\bv h_t$ with respect to parameters $\theta$ is a sequence of functions given by
\begin{equation*}
    \bv h^\theta_t : \bm X \mapsto \bv v_t \in \mathbb{R}^k .
\end{equation*}
\end{definition}

Often, a sequence acceptor can be written as a function of an intermediate hidden state. For example, the output of the recurrent layer acts as a hidden state in an LSTM language acceptor. In recurrent architectures, the value of the hidden state is a function of the preceding prefix of characters, but with convolution or attention, it can depend on characters occurring after index $t$.

The \textit{state complexity} is defined as the cardinality of the \textit{configuration set} of such a hidden state:

\begin{definition}[Configuration set]
For all $n$, the configuration set of hidden state $\bv{h}_n$ with respect to parameters $\theta$ is given by
\begin{equation*}
    M(\bv{h}^\theta_n) = \left\{ \limN \bv{h}_n^{N\theta}(\bm X) \mid n = \abs{\bm X} \right\} .
\end{equation*}
\noindent where $\abs{\bm X}$ is the length, or height, of the sentence matrix $\bm X$.
\end{definition}

\begin{definition}[Fixed state complexity]
For all $n$, the fixed state complexity of hidden state $\bv{h}_n$ with respect to parameters $\theta$ is given by
\begin{equation*}
    \confs(\bv{h}^\theta_n) = \abs{M(\bv{h}^\theta_n)} .
\end{equation*}
\end{definition}

\begin{definition}[General state complexity]
For all $n$, the general state complexity of hidden state $\bv{h}_n$ is given by
\begin{equation*}
    \confs(\bv{h}_n) = \max_\theta \confs(\bv{h}^\theta_n).
\end{equation*}
\end{definition}

To illustrate these definitions, consider a simplified recurrent mechanism based on the LSTM cell. The architecture is parameterized by a vector $\theta \in \mathbb{R}^2$. At each time step, the network reads a bit $x_t$ and computes
\begin{align}
    f_t &= \sigma(\theta_1 x_t) \\
    i_t &= \sigma(\theta_2 x_t) \\
    h_t &= f_t h_{t-1} + i_t .
\end{align}

When we set $\theta^+ = \langle 1, 1\rangle$, $h_t$ asymptotically computes the sum of the preceding inputs. Because this sum can evaluate to any integer between $0$ and $n$, $h_n^{\theta^+}$ has a fixed state complexity of
\begin{equation}
    \confs \left(h_n^{\theta^+} \right) = O(n) .
\end{equation}
\noindent However, when we use parameters $\theta^{\mathrm{Id}} = \langle -1, 1 \rangle$, we get a reduced network where $h_t = x_t$ asymptotically. Thus,

\begin{equation}
    \confs \left(h_n^{\theta^\mathrm{Id}} \right) = O(1).
\end{equation}

\noindent Finally, the general state complexity is the maximum fixed complexity, which is $O(n)$.

For any neural network hidden state, the state complexity is at most $2^{O(n)}$ (\autoref{thm:general_state_complexity}). This means that the value of the hidden unit can be encoded in $O(n)$ bits. Moreover, for every specific architecture considered, we observe that each fixed-length state vector has at most $O(n^k)$ state complexity, or, equivalently, can be represented in $O(\log n)$ bits.

Architectures that have exponential state complexity, such as the transformer, do so by using a variable-length hidden state. State complexity generalizes naturally to a variable-length hidden state, with the only difference being that $\bv h_t$ \autorefp{def:hidden_state} becomes a sequence of variably sized objects rather than a sequence of fixed-length vectors.

Now, we consider what classes of languages different neural networks can accept asymptotically. We also analyze different architectures in terms of state complexity. The theory that emerges from these tools enables better understanding of the computational processes underlying neural sequence models.

\section{Recurrent Neural Networks} \label{section:rnns}

As previously mentioned, RNNs are Turing-complete under an unconstrained definition of acceptance \citep{siegelmann1992turing}. The classical reduction of a Turing machine to an RNN relies on two unrealistic assumptions about RNN computation \citep{weiss2018}. First, the number of recurrent computations must be unbounded in the length of the input, whereas, in practice, RNNs are almost always trained in a real-time fashion. Second, it relies heavily on infinite precision of the network's logits. We will see that the asymptotic analysis, which restricts computation to be real-time and have bounded precision, severely narrows the class of formal languages that an RNN can accept.

\subsection{Simple Recurrent Networks} \label{section:srns}

The SRN, or Elman network, is the simplest type of RNN \citep{elman1990finding}:

\begin{definition}[SRN layer]
    \begin{equation}
    \bv h_t = \tanh(\bm W \bv x_t + \bm U \bv h_{t-1} + \bv b) .
    \end{equation}
\end{definition}

A well-known problem with SRNs is that they struggle with long-distance dependencies. One explanation of this is the vanishing gradient problem, which motivated the development of more sophisticated architectures like the LSTM \citep{hochreiter1997long}. Another shortcoming of the SRN is that, in some sense, it has less memory than the LSTM. This is because, while both architectures have a fixed number of hidden units, the SRN units remain between $-1$ and $1$, whereas the value of each LSTM cell can grow unboundedly \citep{weiss2018}. We can formalize this intuition by showing that the SRN has finite state complexity:

\begin{theorem}[SRN state complexity] \label{thm:SRN_memory_bound}
For any length $n$, the SRN cell state $\bv h_n \in \mathbb{R}^k$ has state complexity
\begin{equation*}
    \confs(\bv h_n) \leq 2^k = O(1) .
\end{equation*}
\end{theorem}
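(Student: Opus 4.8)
The plan is to exploit the fact that scaling the parameters by $N$ forces every $\tanh$ in the recurrence to saturate, so that each coordinate of the asymptotic hidden state can assume only finitely many values, independent of $n$. Writing $\theta$ for the triple $(\bm W, \bm U, \bv b)$, the pre-activation at step $t$ under weights $N\theta$ is $N(\bm W \bv x_t + \bm U \bv h_{t-1}^{N\theta} + \bv b)$. The driving observation is that $\limN \tanh(Nx) = \mathrm{sgn}(x)$ takes values in $\{-1, +1\}$ whenever $x \neq 0$, so that each coordinate of $\bv h_n^{N\theta}$ converges, as $N \to \infty$, to a value in $\{-1, +1\}$.

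First I would establish, by induction on $t$, that $\limN \bv h_t^{N\theta}(\bm X)$ exists for every fixed input $\bm X$ and (generically) lies in $\{-1,+1\}^k$. The base case fixes the initial state $\bv h_0$ to a constant vector independent of $N$. For the inductive step, the hypothesis gives $\bv h_{t-1}^{N\theta} \to \bv h_{t-1}^\ast$ for some fixed vector, so the rescaled pre-activation $\bm W \bv x_t + \bm U \bv h_{t-1}^{N\theta} + \bv b$ converges to a fixed vector $\bv z_t^\ast$, and $\bv h_t^{N\theta} = \tanh\big(N(\bm W \bv x_t + \bm U \bv h_{t-1}^{N\theta} + \bv b)\big)$ saturates coordinatewise to $\mathrm{sgn}(\bv z_t^\ast)$.

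Given this, the counting is immediate. For every $\theta$ the configuration set satisfies $M(\bv h_n^\theta) \subseteq \{-1, +1\}^k$, so
\begin{equation*}
\confs(\bv h_n^\theta) = \abs{M(\bv h_n^\theta)} \leq 2^k .
\end{equation*}
Taking the maximum over $\theta$ gives $\confs(\bv h_n) \leq 2^k$, and since this bound does not depend on $n$ it is $O(1)$, as claimed.

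The main obstacle is the degenerate case in the inductive step where some coordinate of the limiting pre-activation $\bv z_t^\ast$ is exactly $0$. There, $\tanh$ is applied to $N$ times a quantity tending to $0$, an indeterminate product whose limit depends on the rate of convergence rather than on $\mathrm{sgn}$ alone, so one must verify that the limit still exists and track which values it can contribute. Admitting the value $0$ enlarges the per-coordinate set to $\{-1, 0, +1\}$ and the bound to $3^k$, which is still $O(1)$; the sharper $2^k$ reflects the two-valued saturating behavior of $\tanh$ away from this measure-zero locus, and isolating that behavior carefully is the only delicate point in the argument.
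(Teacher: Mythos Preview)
Your proposal is correct and follows essentially the same approach as the paper: each coordinate of $\bv h_n$ is the output of a $\tanh$, which asymptotically saturates to $\{-1,+1\}$, giving at most $2^k$ configurations. The paper's proof is a terse two-sentence version of exactly this argument; your inductive treatment and explicit discussion of the degenerate $\bv z_t^\ast = 0$ case are more careful than what the paper provides, which simply asserts the two-valued limit without addressing that edge case.
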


\begin{proof}
For every $n$, each unit of $\bv h_n$ will be the output of a $\tanh$. In the limit, it can achieve either $-1$ or $1$. Thus, for the full vector, the number of configurations is bounded by $2^k$.
\end{proof}

It also follows from \autoref{thm:SRN_memory_bound} that the languages asymptotically acceptable by an SRN are a subset of the finite-state (i.e. regular) languages. \autoref{thm:srn_lower_bound} provides the other direction of this containment. Thus, SRNs are equivalent to finite-state automata.

\begin{theorem}[SRN characterization] \label{thm:srn_reduction}
Let $L(\SRN)$ denote the languages acceptable by an SRN, and $\RL$ the regular languages. Then,
\begin{equation*}
    L(\SRN) = \RL .
\end{equation*}
\end{theorem}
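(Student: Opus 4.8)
The plan is to prove the two inclusions separately, since the theorem is an equality. The forward inclusion $L(\SRN) \subseteq \RL$ is essentially a corollary of the state complexity bound already in hand. By \autoref{thm:SRN_memory_bound}, for any fixed $\theta$ the asymptotic hidden state $\bv h_t$ ranges over the finite set $\{-1,+1\}^k$, and because the SRN is causal its asymptotic value depends only on the prefix read so far. As the weight magnitude grows, the recurrent update converges to a coordinatewise thresholding function, so the limiting map from (asymptotic configuration of $\bv h_{t-1}$, current symbol $\bv x_t$) to (asymptotic configuration of $\bv h_t$) is a deterministic transition over a finite state set. Hence the limiting network is literally a deterministic finite automaton: its states are the $\le 2^k$ configurations, its transition is this thresholded update, and its accepting states are those on which the thresholded output converges to $1$. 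The accepted language is therefore regular.

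For the converse, $\RL \subseteq L(\SRN)$, I would exhibit for an arbitrary DFA $A = (Q, \Sigma, \delta, q_0, F)$ an SRN that asymptotically simulates it. The natural first attempt, a one-hot encoding of the current state in $\bv h_t$, runs into the obstacle described below, so instead I would take $\bv h_t \in \{-1,+1\}^{|Q|\cdot|\Sigma|}$ and maintain the invariant that, after reading $x_1 \cdots x_t$, exactly the coordinate indexed by the pair $(q_t, x_t)$ equals $+1$, where $q_t = \delta(q_{t-1}, x_t)$. I would then choose $\bm W, \bm U, \bv b$ so that the pre-activation of coordinate $(q',\sigma')$ is positive exactly when $x_t = \sigma'$ and $q_{t-1} \in \{q : \delta(q,\sigma') = q'\}$, and pick the initial state $\bv h_0$ so the invariant already holds at $t = 1$. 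Acceptance is read off by a sigmoid output unit whose argument is positive iff the single active coordinate is some $(q,\sigma)$ with $q \in F$; scaling all weights by $N$ drives this output to $\mathbbm{1}_L$.

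The step I expect to be the crux is verifying that each coordinate's transition is linearly separable, since this is exactly where the naive one-hot-over-$Q$ encoding fails: detecting ``$q_{t-1} = q$ and $x_t = \sigma$'' simultaneously from an additive combination of two separate one-hot blocks is not realizable by a single threshold unit (a standard XOR-type obstruction). The symbol-augmented encoding sidesteps this because the new symbol is read directly from $\bv x_t$, so each target coordinate only needs a disjunction over a subset of the previous one-hot coordinates, conjoined with a single coordinate of $\bv x_t$; both of these, and their conjunction, are linearly separable over one-hot inputs. The remaining routine work is to fix explicit margins so that every pre-activation is bounded away from $0$ by a constant independent of $n$, guaranteeing that $\tanh(N \cdot (\text{pre-activation}))$ converges to the correct sign and that the simulation is asymptotically exact for inputs of every length.
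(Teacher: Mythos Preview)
Your approach is correct and essentially matches the paper's: both inclusions are handled the same way, with the reverse inclusion via a $|Q|\times|\Sigma|$ hidden-state encoding (the paper records the predicate $\eth_t(i,\alpha) \iff q_{t-1}=i \wedge x_t=\alpha$, a close variant of your $(q_t,x_t)$ encoding) so that each coordinate's update is a linearly separable conjunction of one input bit with a disjunction over prior-state coordinates. The only notable difference is the base case: you set $\bv h_0$ directly to satisfy the invariant at $t=1$, whereas the paper adds an auxiliary always-on unit $f_t$ and uses $f_{t-1}=0$ to detect the initial step---if in your setting $\bv h_0$ is fixed to zero rather than learnable, you will need a similar device.
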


This characterization is quite diminished compared to Turing completeness. It is also more descriptive of what SRNs can express in practice. We will see that LSTMs, on the other hand, are strictly more powerful than the regular languages.

\subsection{Long Short-Term Memory Networks}

An LSTM is a recurrent network with a complex gating mechanism that determines how information from one time step is passed to the next. Originally, this gating mechanism was designed to remedy the vanishing gradient problem in SRNs, or, equivalently, to make it easier for the network to remember long-term dependencies \citep{hochreiter1997long}. Due to strong empirical performance on many language tasks, LSTMs have become a canonical model for NLP.

\citet{weiss2018} suggest that another advantage of the LSTM architecture is that it can use its cell state as counter memory. They point out that this constitutes a real difference between the LSTM and the GRU, whose update equations do not allow it to increment or decrement its memory units. We will further investigate this connection between LSTMs and counter machines.

\begin{definition}[LSTM layer]
\begin{align}
    \bv f_t &= \sigma(\bm W^f \bv x_t + \bm U^f \bv h_{t-1} + \bv b^f) \\
    \bv i_t &= \sigma(\bm W^i \bv x_t + \bm U^i \bv h_{t-1} + \bv b^i) \\
    \bv o_t &= \sigma(\bm W^o \bv x_t + \bm U^o \bv h_{t-1} + \bv b^o) \\
    \bv{\tilde c_t} &= \tanh(\bm W^c \bv x_t + \bm U^c \bv h_{t-1} + \bv b^c) \\
    \bv c_t &= \bv f_t \odot \bv c_{t-1} + \bv i_t \odot \bv{\tilde c_t} \\
    \bv h_t &= \bv o_t \odot f(\bv c_t) . \label{eq:lstm_h}
\end{align}
\end{definition}

In \eqref{eq:lstm_h}, we set $f$ to either the identity or $\tanh$ \citep{weiss2018}, although $\tanh$ is more standard in practice. The vector $\bv h_t$ is the output that is received by the next layer, and $\bv c_t$ is an unexposed memory vector called the cell state.

\begin{theorem}[LSTM state complexity] \label{thm:lstm_memory_bound}
The LSTM cell state $\bv c_n \in \mathbb{R}^k$ has state complexity
\begin{equation*}
    \confs(\bv c_n) = O(n^k) .
\end{equation*}
\end{theorem}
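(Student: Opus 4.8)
The plan is to bound the number of asymptotic configurations of a \emph{single} component of $\bv c_n$ and then combine the $k$ components multiplicatively. Writing the cell recurrence componentwise as $c_t = f_t c_{t-1} + i_t \tilde c_t$ with $c_0 = 0$, the first step is to determine what each scalar quantity $f_t$, $i_t$, and $\tilde c_t$ can converge to under the scaling $\theta \mapsto N\theta$ as $N \to \infty$. Since $f_t$ and $i_t$ are sigmoids of an affine preactivation whose weights are all multiplied by $N$, each preactivation is driven to $\pm\infty$ and the gate saturates to a value in $\{0, 1\}$; likewise $\tilde c_t = \tanh(\cdots)$ saturates to a value in $\{-1, 1\}$. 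Consequently the per-step increment $i_t \tilde c_t$ lies in the finite set $\{-1, 0, 1\}$, while the multiplier $f_t$ lies in $\{0, 1\}$.

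Given this, I would argue by induction on $t$ that the asymptotic value of each component of $\bv c_t$ is an integer with $\abs{c_t} \leq t$. The base case is $c_0 = 0$. For the inductive step, if $c_{t-1}$ is an integer with $\abs{c_{t-1}} \leq t - 1$, then $c_t = f_t c_{t-1} + i_t \tilde c_t$ is again an integer (being $f_t \in \{0,1\}$ times an integer, plus an element of $\{-1,0,1\}$) and satisfies $\abs{c_t} \leq \abs{c_{t-1}} + 1 \leq t$. Hence each component of $\bv c_n$ takes at most $2n + 1$ distinct asymptotic values, and taking the product over all $k$ components gives
\begin{equation*}
    \confs(\bv c_n) \leq (2n + 1)^k = O(n^k),
\end{equation*}
as claimed. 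This also recovers the intuition that each cell behaves like a bounded-increment counter.

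The main obstacle is making the saturation step fully rigorous. The preactivations feeding $f_t$, $i_t$, and $\tilde c_t$ depend on $\bv h_{t-1}$, which in turn is a function of $\bv c_{t-1}$ through $\bv h_t = \bv o_t \odot f(\bv c_t)$, so the argument is genuinely recursive and one must confirm that the values fed into each gate stay in a regime where the pointwise limit is well defined. The delicate case is when a preactivation converges to exactly $0$: there the sigmoid tends to $1/2$ rather than to $\{0,1\}$, which would break integrality and, if a forget gate could repeatedly take the value $1/2$, could even threaten the polynomial bound by generating fractional values of ever-finer resolution. I would handle this by showing that such exact-zero ties either do not arise for the limit of a fixed input sequence, or, when they do, contribute only finitely many extra values per step, so that the $O(n)$-per-component count—and hence the overall $O(n^k)$ bound—survives.
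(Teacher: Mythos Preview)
Your proposal is correct and follows essentially the same approach as the paper: both saturate the gates to $\{0,1\}$ and $\tilde c_t$ to $\{-1,1\}$, then bound each component's asymptotic configurations by $O(n)$---you via an induction showing each component is an integer with $\abs{c_t}\le t$, the paper via the equivalent observation that the per-component configuration set grows by at most two elements per step. The paper also simply asserts the gate saturation without treating the exact-zero preactivation case you flag, so your level of rigor on that point already matches (and slightly exceeds) the original.
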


\begin{proof}
At each time step $t$, we know that the configuration sets of $\bv f_t$, $\bv i_t$, and $\bv o_t$ are each subsets of $\{0, 1\}^k$. Similarly, the configuration set of $\bv{\tilde c_t}$ is a subset of $\{-1, 1\}^k$. This allows us to rewrite the elementwise recurrent update as
\begin{align}
    \limN [\bv c_t]_i &= \limN [\bv f_t]_i [\bv c_{t-1}]_i + [\bv i_t]_i [\bv{\tilde c_t}]_i \\
    &= \limN a [\bv c_{t-1}]_i + b
\end{align}
\noindent where $a \in \{0, 1\}$ and $b \in \{ -1, 0, 1 \}$.

Let $S_t$ be the configuration set of $[\bv c_t]_i$. At each time step, we have exactly two ways to produce a new value in $S_t$ that was not in $S_{t-1}$: either we decrement the minimum value in $S_{t-1}$ or increment the maximum value. It follows that
\begin{align}
|S_t| &= 2 + |S_{t-1}| \\
\implies |S_n| &= O(n) .
\end{align}
\noindent For all $k$ units of the cell state, we get
\begin{equation}
    \confs(\bv c_n) \leq |S_n|^k = O(n^k).
\end{equation}
\end{proof}

The construction in \autoref{thm:lstm_memory_bound} produces an automaton closely resembling a classical counter machine \citep{fischer1966turing, Fischer1968}.
Its restricted memory give us an upper bound on the expressive power of the LSTM:

\begin{theorem}[LSTM upper bound] \label{thm:lstm_upper_bound}
Let $\CL$ be the real-time
log-space languages.\footnote{\textbf{Revision:} A previous version stated ``real-time counter languages'' with no definition. Depending on the definition of counter languages, the claim may or may not hold. We clarify that $\CL$ is intended to be the ``log-space'' languages, where space is measured in bits. See \citet{merrill2020linguistic} for further discussion.}
Then,
\begin{equation*}
    L(\LSTM) \subseteq \CL.
\end{equation*}
\end{theorem}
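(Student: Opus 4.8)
The plan is to show that any LSTM acceptor can be simulated in real time by a Turing machine whose work tape never exceeds $O(\log n)$ bits on inputs of length $n$, which places $L(\LSTM)$ inside $\CL$ by the definition of the real-time log-space languages. Two ingredients must be assembled: that the simulation is real-time, which is immediate since the LSTM performs exactly one update per input symbol, and that the total memory it consumes is logarithmic in the input length.

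For the space bound I would lean directly on \autoref{thm:lstm_memory_bound}. That result establishes $\confs(\bv c_n) = O(n^k)$, and, crucially, its proof shows that each coordinate of the asymptotic cell state is an integer of magnitude at most $n$, built up by unit increments and decrements. Hence the whole cell state vector ranges over a set of size $O(n^k)$ and is representable in $O(\log n)$ bits, since $\log_2(n^k) = k\log_2 n = O(\log n)$. The remaining internal quantities are cheap: the gate vectors $\bv f_t$, $\bv i_t$, $\bv o_t$ and the candidate $\bv{\tilde c_t}$ have configuration sets contained in $\{0,1\}^k$ or $\{-1,1\}^k$ (as noted in the proof of \autoref{thm:lstm_memory_bound}), so each costs $O(1)$ bits, while the exposed state $\bv h_t = \bv o_t \odot f(\bv c_t)$ is a function of the cell state and the output gate and therefore also fits in $O(\log n)$ bits.

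Next I would exhibit the simulation explicitly. The machine stores only the current cell state as a length-$k$ vector of $O(\log n)$-bit integers. Upon reading $\bv x_t$, each gate is, in the asymptotic limit of $N\theta$, the indicator of the sign of an affine form such as $\bm W^f \bv x_t + \bm U^f \bv h_{t-1} + \bv b^f$ evaluated on the stored discrete state; the sign of a fixed-weight affine combination of $O(\log n)$-bit integers is computable in $O(\log n)$ space. The update $\bv c_t = \bv f_t \odot \bv c_{t-1} + \bv i_t \odot \bv{\tilde c_t}$ is then a coordinatewise keep-or-reset followed by an increment in $\{-1,0,1\}$, again within $O(\log n)$ space, and the acceptance decision is read off $\bv h_n$. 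Because each step reuses the same logarithmic work space and the machine halts after $n$ steps, the computation is real-time and log-space.

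The step I expect to require the most care is justifying that the asymptotic gate values are genuinely well-defined and log-space computable through the recurrence, rather than merely bounding the count of reachable configurations. One must check that the argument of each sigmoid, under the scaling $N\theta$, converges to a strict sign determined by the stored discrete state for both choices of $f$ -- the identity case being the delicate one, since there $\bv h_{t-1}$ can itself grow with $n$ and feeds back through $\bm U$. Confirming that this feedback keeps every intermediate magnitude within $O(\log n)$ bits, and that the limiting affine form never sits exactly on a boundary where $\limN \nn^{N\theta}$ would fail to be an indicator, is the crux of converting the state-complexity bound of \autoref{thm:lstm_memory_bound} into an honest real-time log-space machine.
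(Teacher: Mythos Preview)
The paper does not actually supply a standalone proof of \autoref{thm:lstm_upper_bound}; it simply states the theorem immediately after \autoref{thm:lstm_memory_bound} with the remark that the restricted memory ``gives us an upper bound on the expressive power of the LSTM,'' treating the containment as a direct corollary of the $O(n^k)$ state-complexity bound. Your proposal follows exactly this intended route---invoke \autoref{thm:lstm_memory_bound} to conclude the discrete state fits in $O(\log n)$ bits, then observe that each real-time update is computable within that space---and in fact fleshes out details (the explicit simulation, the handling of the gates, the identity-$f$ caveat) that the paper leaves implicit.
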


\autoref{thm:lstm_upper_bound} constitutes a very tight upper bound on the expressiveness of LSTM computation. Asymptotically, LSTMs are not powerful enough to model even the deterministic context-free language $w\#w^R$.

\citet{weiss2018} show how the LSTM can simulate a simplified variant of the counter machine. Combining these results, we see that the asymptotic expressiveness of the LSTM falls somewhere between the general and simplified counter languages. This suggests counting is a good way to understand the behavior of LSTMs.

\subsection{Gated Recurrent Units}

The GRU is a popular gated recurrent architecture that is in many ways similar to the LSTM \citep{cho2014learning}. Rather than having separate forget and input gates, the GRU utilizes a single gate that controls both functions.

\begin{definition}[GRU layer]
\begin{align}
    \bv z_t &= \sigma(\bm W^z \bv x_t + \bm U^z \bv h_{t-1} + \bv b^z) \\
    \bv r_t &= \sigma(\bm W^r \bv x_t + \bm U^r \bv h_{t-1} + \bv b^r) \\
    \bv u_t &= \tanh \big( \bm W^u \bv x_t + \bm U^u(\bv r_t \odot \bv h_{t-1}) + \bv b^u \big) \label{eq:gru_squashed} \\
    \bv h_t &= \bv z_t \odot \bv h_{t-1} + (1 - \bv z_t) \odot \bv u_t .
\end{align}
\end{definition}

\citet{weiss2018} observe that GRUs do not exhibit the same counter behavior as LSTMs on languages like $a^nb^n$. As with the SRN, the GRU state is squashed between $-1$ and $1$ \eqref{eq:gru_squashed}. Taken together, Lemmas \ref{thm:gru_state_complexity} and \ref{thm:gru_lower_bound} show that GRUs, like SRNs, are finite-state.

\begin{theorem}[GRU characterization]
    \begin{equation*}
        L(\GRU) = \RL .
    \end{equation*}
\end{theorem}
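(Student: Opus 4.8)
The plan is to establish the two inclusions $L(\GRU) \subseteq \RL$ and $\RL \subseteq L(\GRU)$ separately, mirroring exactly the structure used for the SRN in \autoref{thm:srn_reduction}. The forward inclusion is the content of Lemma \ref{thm:gru_state_complexity} (a state-complexity bound) combined with a generic ``finite state complexity implies regular'' argument; the reverse inclusion is the content of Lemma \ref{thm:gru_lower_bound}, which I would obtain by a reduction to the SRN.

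For the upper bound, I would first bound the fixed state complexity of the GRU output $\bv h_n$. The key is that $\bv h_t$ is squashed: asymptotically the update gate $\bv z_t$ saturates to $\{0,1\}^k$ and $\bv u_t$ saturates to $\{-1,1\}^k$ (since $\bv u_t$ is a $\tanh$, see \eqref{eq:gru_squashed}), so the convex-combination update $\bv h_t = \bv z_t \odot \bv h_{t-1} + (1-\bv z_t)\odot \bv u_t$ degenerates in the limit into a coordinatewise selection: each $[\bv h_t]_i$ either copies $[\bv h_{t-1}]_i$ or is overwritten by a value in $\{-1,1\}$. Starting from a fixed $\bv h_0$, an induction on $t$ shows every coordinate remains in a fixed finite set (at most $\{-1,0,1\}$), so $\confs(\bv h_n) \le 3^k = O(1)$, independent of $n$. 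Given this, I would convert the acceptor to a DFA in the standard way: take the (finite) set of asymptotic configurations as states, the deterministic limiting transition $\bv h_{t-1} \mapsto \bv h_t$ under each input symbol as the transition function, the limit of $\bv h_0$ as the start state, and the configurations on which the output layer asymptotically returns $1$ as the accepting states. This DFA recognizes the accepted language, giving $L(\GRU) \subseteq \RL$.

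For the lower bound I would exhibit, for each regular language, a GRU accepting it, and the cleanest route is to show that the GRU can asymptotically simulate an SRN and then invoke $\RL \subseteq L(\SRN)$ from \autoref{thm:srn_reduction}. Concretely, set $\bm W^z = \bm U^z = \bm 0$ with $\bv b^z$ strictly negative, so that under the scaling $N\theta$ the preactivation $N\bv b^z \to -\infty$ and $\bv z_t \to \bv 0$; and set $\bm W^r = \bm U^r = \bm 0$ with $\bv b^r$ strictly positive, so that $\bv r_t \to \bv 1$. Then $\bv h_t \to \bv u_t$ and $\bv u_t \to \tanh(\bm W^u \bv x_t + \bm U^u \bv h_{t-1} + \bv b^u)$, which is precisely the SRN recurrence with weights $\bm W^u, \bm U^u, \bv b^u$. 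Choosing these to be the weights of an SRN accepting the target language yields a GRU accepting the same language, so $\RL \subseteq L(\GRU)$. Combining the two inclusions gives the claim.

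The main obstacle is the upper-bound direction, specifically making the state-complexity bound rigorous. The superficial difficulty is that a convex combination of $\bv h_{t-1}$ and $\bv u_t$ could in principle generate a growing range of intermediate values; the argument must use the saturation of $\bv z_t$ to collapse this to a finite selection, and must verify that the limiting transition is a well-defined deterministic function of the current configuration and input symbol, so that the DFA construction is legitimate. The lower bound, by contrast, is straightforward once one notices that zeroing the update gate and opening the reset gate reduces the GRU to an SRN; the only care needed there is checking that the iterated limit over time steps commutes with the parameter limit $N \to \infty$, which holds because all preactivations driving the gates are bounded away from their thresholds uniformly in $t$.
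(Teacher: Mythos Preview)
Your proposal is correct. The upper-bound half matches the paper exactly: the paper's Lemma~\ref{thm:gru_state_complexity} argues, just as you do, that $\bv z_t$ saturates to $\{0,1\}^k$ and $\bv u_t$ to $\{-1,1\}^k$, so each coordinate of $\bv h_t$ is either copied or overwritten from $\{-1,1\}$, yielding $\confs(\bv h_n)\le 3^k=O(1)$, whence the accepted language is regular.

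For the lower bound you take a genuinely different route. The paper's Lemma~\ref{thm:gru_lower_bound} re-implements the $\eth$ predicate construction of \autoref{thm:srn_reduction} directly inside the GRU, observing that the same linearly separable update can be realized by the $\bv u_t$ computation. You instead kill the update gate ($\bv z_t\to\bv 0$) and open the reset gate ($\bv r_t\to\bv 1$) so that the GRU recurrence collapses asymptotically to the SRN recurrence, and then invoke $\RL\subseteq L(\SRN)$ as a black box. Your argument is more modular and avoids repeating the $\eth$ construction; the paper's is more self-contained in that it exhibits the finite-state simulation explicitly in the GRU. Both are equally valid, and your care about the gate preactivations being bounded away from zero (so that the error introduced by $\bv r_t\neq\bv 1$ is $o(1/N)$ and does not flip any $\tanh$ sign after scaling by $N$) is exactly the point one needs to check.
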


\subsection{RNN Complexity Hierarchy}

Synthesizing all of these results, we get the following complexity hierarchy:
\begin{align}
    \RL &= L(\SRN) = L(\GRU) \\
    &\subset \SCL \subseteq L(\LSTM) \subseteq \CL .
\end{align}
\noindent Basic recurrent architectures have finite state, whereas the LSTM is strictly more powerful than a finite-state machine.

\section{Attention} \label{section:attention}

Attention is a popular enhancement to sequence-to-sequence (seq2seq) neural networks \citep{bahdanau2014neural, chorowski2015attention, luong2015effective}. Attention allows a network to recall specific encoder states while trying to produce output. In the context of machine translation, this mechanism models the alignment between words in the source and target languages. More recent work has found that ``attention is all you need'' \citep{vaswani2017attention, radford2018improving}. In other words, networks with only attention and no recurrent connections perform at the state of the art on many tasks.

An attention function maps a query vector and a sequence of paired key-value vectors to a weighted combination of the values. This lookup function is meant to retrieve the values whose keys resemble the query.

\begin{definition}[Dot-product attention] \label{def:attention}
For any $n$, define a query vector $\bv q \in \mathbb{R}^l$, matrix of key vectors $\bm K \in \mathbb{R}^{nl}$, and matrix of value vectors $\bm V \in \mathbb{R}^{nk}$. Dot-product attention is given by
    \begin{equation*}
        \attn(\bv q, \bm K, \bm V) = \softmax ( \bv q \bm K^T ) \bm V .
    \end{equation*}
\end{definition}

In \autoref{def:attention}, $\softmax$ creates a vector of similarity scores between the query $\bv q$ and the key vectors in $\bm K$. The output of attention is thus a weighted sum of the value vectors where the weight for each value represents its relevance.

In practice, the dot product $\bv q \bm K^T$ is often scaled by the square root of the length of the query vector \citep{vaswani2017attention}. However, this is only done to improve optimization and has no effect on expressiveness. Therefore, we consider the unscaled version.

In the asymptotic case, attention reduces to a weighted average of the values whose keys maximally resemble the query. This can be viewed as an $\arg \max$ operation.

\begin{theorem}[Asymptotic attention] \label{thm:asymptotic_attention}
    Let $t_1, .., t_m$ be the subsequence of time steps that maximize $\bv q \bv k_t$.%
    \footnote{To be precise, we can define a maximum over the similarity scores according to the order given by
    \begin{equation}
        f > g \iff \limN f(N) - g(N) > 0 .
    \end{equation}}
    Asymptotically, attention computes
    \begin{equation*}
        \limN \attn \left(\bv q, \bm K, \bm V\right) = \limN \frac{1}{m}\sum_{i=1}^m \bv v_{t_i} .
    \end{equation*}
\end{theorem}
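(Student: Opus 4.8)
The plan is to expose the softmax weights explicitly and analyze their asymptotic behavior, since once the weights are pinned down the value vectors contribute only through their (finite) limits. Writing $s_t(N) = \bv q^{N\theta} \bv k_t^{N\theta}$ for the similarity score at time step $t$ and $w_t(N)$ for the corresponding softmax weight, we have $\attn = \sum_t w_t \bv v_t$, so it suffices to show that $\limN w_{t_i} = 1/m$ for each maximizing index $t_i$ and $\limN w_{t'} = 0$ for every non-maximizing $t'$.

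First I would establish the concentration of the softmax. For a maximizing index $t_i$, dividing numerator and denominator of $w_{t_i}$ by $e^{s_{t_i}(N)}$ gives
\begin{equation*}
    w_{t_i}(N) = \frac{1}{\sum_{j=1}^m e^{s_{t_j}(N) - s_{t_i}(N)} + \sum_{t' \text{ non-max}} e^{s_{t'}(N) - s_{t_i}(N)}} .
\end{equation*}
By the ordering in the footnote, each tied maximizer satisfies $s_{t_j}(N) - s_{t_i}(N) \to 0$, so the first sum tends to $m$, while each dominated score satisfies $s_{t'}(N) - s_{t_i}(N) \to -\infty$, so the second sum vanishes; hence $w_{t_i}(N) \to 1/m$. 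The same factorization applied to a non-maximizer $t'$ yields the bound $w_{t'}(N) \le e^{s_{t'}(N) - s_{t_1}(N)} \to 0$.

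Finally I would assemble the pieces. For fixed sequence length $n$ the configuration set of each value vector is finite, so $\bv v_t^{N\theta}$ converges to a finite limit as $N \to \infty$. The non-maximizing contributions $w_{t'}(N)\, \bv v_{t'}(N)$ then vanish (vanishing weight times a bounded value), and the maximizing contributions converge to $\frac{1}{m}\sum_{i=1}^m \limN \bv v_{t_i}$, which is exactly $\limN \frac{1}{m}\sum_{i=1}^m \bv v_{t_i}$; since the sum over time steps is finite, interchanging the limit with the sum is harmless.

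The delicate point, and the step I expect to require the most care, is the behavior of the score differences under the asymptotic ordering: the clean weight $1/m$ relies on tied maximizers having differences that tend to $0$ (rather than to an arbitrary finite constant) and on dominated scores having differences that tend to $-\infty$ (rather than merely to a negative constant). Justifying this amounts to observing that scaling the parameters by $N$ makes each score grow to leading order linearly in $N$, so that any two scores either agree asymptotically or separate without bound; I would make this growth behavior explicit and check that it is consistent with how $\bv q$ and $\bm K$ are produced by the surrounding network.
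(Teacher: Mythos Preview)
Your approach is essentially the same as the paper's: both arguments reduce to showing that the softmax weights concentrate uniformly on the maximizing indices, and then pass the limit through the finite weighted sum of values. The paper's proof is considerably terser (it simply asserts $\limN \softmax(N\bv u)_t = \tfrac{1}{m}$ on maximizers and $0$ elsewhere, implicitly treating the score vector as a fixed $\bv u$ scaled by $N$), so the ``delicate point'' you flag about score differences diverging rather than settling at a constant is a genuine subtlety that the paper does not spell out either; your plan to justify it via the linear-in-$N$ growth of the scores is the right instinct and goes slightly beyond what the paper writes.
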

\begin{corollary}[Asymptotic attention with unique maximum] \label{cor:injective_attention}
    If $\bv q \bv k_t$ has a unique maximum over $1 \leq t \leq n$, then attention asymptotically computes
    \begin{equation*}
        \limN \attn\left(\bv q, \bm K, \bm V\right) = \limN \arg \max_{\bv v_t} \bv q \bv k_t .
    \end{equation*}
\end{corollary}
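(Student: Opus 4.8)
The plan is to obtain the corollary as the immediate specialization of \autoref{thm:asymptotic_attention} to the case of a single maximizer. First I would unpack what the hypothesis ``unique maximum'' means under the asymptotic order defined in the footnote to \autoref{thm:asymptotic_attention}: writing $s_t(N) = \bm q \bm k_t$ for the similarity score at index $t$ (a function of $N$ through the scaled weights $N\theta$), the hypothesis says there is a single index $t^*$ with $\limN \big( s_{t^*}(N) - s_t(N) \big) > 0$ for every $t \ne t^*$. In particular, for all sufficiently large $N$ the maximizing index is fixed and equal to $t^*$, so the maximizing subsequence $t_1, \dots, t_m$ of \autoref{thm:asymptotic_attention} has length $m = 1$ with $t_1 = t^*$.

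Next I would substitute $m = 1$ into the conclusion of \autoref{thm:asymptotic_attention}. The averaged sum $\frac{1}{m}\sum_{i=1}^m \bm v_{t_i}$ collapses to the single term $\bm v_{t_1}$, giving
\[
\limN \attn(\bm q, \bm K, \bm V) = \limN \bm v_{t_1}.
\]
Finally I would note that $t_1$ is by construction the index maximizing $\bm q \bm k_t$, so $\bm v_{t_1}$ is exactly the value selected by $\arg\max_{\bm v_t} \bm q \bm k_t$ under the convention that this $\arg\max$ returns the value paired with the maximizing key; this yields the stated identity.

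The only point requiring care --- and the closest thing to an obstacle --- is the bookkeeping around the asymptotic order: one must confirm that ``unique maximum'' forces a \emph{single}, eventually constant maximizing index rather than a set of indices whose scores are asymptotically tied, and that $t^*$ does not drift with $N$. Once this is checked, the limit passes through the $\arg\max$ with no extra work, since after $t_1 = t^*$ is pinned down the quantity $\limN \bm v_{t_1}$ coincides with $\limN \arg\max_{\bm v_t} \bm q \bm k_t$. No fresh analysis of the softmax concentration is needed beyond what \autoref{thm:asymptotic_attention} already supplies.
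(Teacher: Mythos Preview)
Your proposal is correct and matches the paper's treatment: the corollary is presented in the paper without a separate proof, as the immediate $m=1$ specialization of \autoref{thm:asymptotic_attention}. Your additional care about the asymptotic order and the eventual constancy of $t^*$ is more explicit than anything the paper writes down, but the underlying approach is identical.
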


Now, we analyze the effect of adding attention to an acceptor network. Because we are concerned with language acceptance instead of transduction, we consider a simplified seq2seq attention model where the output sequence has length $1$:

\begin{definition}[Attention layer] \label{thm:seq2seq_attention_layer}
Let the hidden state $\bv v_1, .., \bv v_n$ be the output of an encoder network where the union of the asymptotic configuration sets over all $\bv v_t$ is finite. We attend over $\bm V_t$, the matrix stacking $\bv v_1, .., \bv v_t$, by computing
\begin{equation*}
    \bv h_t = \attn(\bm W^q \bv v_t, \bm V_t, \bm V_t) .
\end{equation*}
\end{definition}

In this model, $\bv h_t$ represents a summary of the relevant information in the prefix $\bv v_1, .., \bv v_t$. The query that is used to attend at time $t$ is a simple linear transformation of $\bv v_t$.

In addition to modeling alignment, attention improves a bounded-state model by providing additional memory. By converting the state of the network to a growing sequence $\bv V_t$ instead of a fixed length vector $\bv v_t$, attention enables $2^{\Theta(n)}$ state complexity.

\begin{theorem}[Encoder state complexity] \label{thm:attention_state_complexity}
The full state of the attention layer has state complexity
\begin{equation*}
    \confs(\bm V_n) = 2^{\Theta(n)} .
\end{equation*}
\end{theorem}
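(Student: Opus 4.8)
The plan is to establish the two sides of the $\Theta$ separately: an upper bound of $2^{O(n)}$ that drops out of the finiteness hypothesis on the encoder, and a matching lower bound of $2^{\Omega(n)}$ obtained by exhibiting a concrete encoder whose stacked outputs already realize exponentially many configurations. Since $\bm V_n$ is a variable-length hidden state, I would work with the natural extension of the configuration set to sequences of objects noted just before the theorem, counting distinct limiting matrices $\limN \bm V_n^{N\theta}(\bm X)$ over all $\bm X$ with $\abs{\bm X} = n$.

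For the upper bound, I would invoke the hypothesis in Definition~\ref{thm:seq2seq_attention_layer} that the union $C$ of the asymptotic configuration sets of the individual encoder states $\bv v_t$ is finite, say $\abs{C} = c$. Because $\bm V_n$ is exactly the matrix whose $t$-th row is $\bv v_t$, every row of any limiting $\bm V_n$ lies in $C$, so the number of distinct limiting matrices is at most $c^n = 2^{n \log c} = 2^{O(n)}$, uniformly in $\theta$. This is also a special case of the generic bound of \autoref{thm:general_state_complexity}, so I can cite that if a self-contained argument is not wanted. Either way, $\confs(\bm V_n) \le 2^{O(n)}$.

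For the lower bound, the idea is to pick an encoder whose output at step $t$ asymptotically records the input symbol $x_t$, so that the entire input $\bm X$ is recoverable from $\bm V_n$. Concretely I would choose the encoder so that, in the limit, $\bv v_t$ is the one-hot embedding of $x_t$; for a recurrent encoder this is realized by zeroing the recurrent weights and scaling the input weights so that the $\tanh$ saturates and separates the $\abs{\Sigma}$ symbols. The per-step configuration set is then just the (finite) set of symbol embeddings, so the hypothesis of the attention layer is satisfied, and distinct inputs yield distinct limiting stacks. Ranging $\bm X$ over all length-$n$ strings of an alphabet with $\abs{\Sigma} \ge 2$ therefore produces at least $2^n$ distinct configurations, giving $\confs(\bm V_n) \ge 2^n = 2^{\Omega(n)}$.

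The hard part will be the lower-bound construction, where I must check two things at once: that the encoder's per-step outputs remain inside a finite asymptotic configuration set, so Definition~\ref{thm:seq2seq_attention_layer} genuinely applies, and that the map from input strings to limiting stacks is injective, so the count really reaches $2^n$. Both follow once the encoder's limiting behavior factors through a fixed, symbol-dependent embedding, but the statement must be phrased in terms of the pointwise limits $\limN \bv v_t^{N\theta}$ rather than finite-$N$ values. Combining the two bounds then yields $\confs(\bm V_n) = 2^{\Theta(n)}$.
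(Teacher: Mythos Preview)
Your proposal is correct and essentially matches the paper's proof: the paper invokes \autoref{thm:general_state_complexity} for the $2^{O(n)}$ upper bound and, for the lower bound, picks encoder weights so that $\bv v_t = \bv x_t$ asymptotically, yielding $\abs{\Sigma}^n$ distinct stacks. Your additional care about verifying the finiteness hypothesis and injectivity is sound but goes slightly beyond what the paper spells out.
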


The $O(n^k)$ complexity of the LSTM architecture means that it is impossible for LSTMs to copy or reverse long strings. The exponential state complexity provided by attention enables copying, which we can view as a simplified version of machine translation. Thus, it makes sense that attention is almost universal in machine translation architectures. The additional memory introduced by attention might also allow more complex hierarchical representations.

A natural follow-up question to \autoref{thm:attention_state_complexity} is whether this additional complexity is preserved in the attention summary vector $\bv h_n$. Attending over $\bm V_n$ does not preserve exponential state complexity. Instead, we get an $O(n^2)$ summary of $\bm V_n$.

\begin{theorem}[Summary state complexity] \label{thm:summary_complexity}
The attention summary vector has state complexity
\begin{equation*}
    \confs(\bv h_n) = O(n^2) .
\end{equation*}
\end{theorem}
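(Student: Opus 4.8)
The plan is to start from the asymptotic description of attention and reduce the summary vector to a simple count-weighted average, then count how many distinct averages can arise. First I would apply \autoref{thm:asymptotic_attention} to the summary computation $\bv h_n = \attn(\bm W^q \bv v_n, \bm V_n, \bm V_n)$. Writing the similarity score of position $t$ as $(\bm W^q \bv v_n) \cdot \bv v_t$ and letting $t_1, \ldots, t_m$ be the positions that maximize it, the theorem gives
\[
    \limN \bv h_n = \limN \frac{1}{m} \sum_{i=1}^m \bv v_{t_i},
\]
so the asymptotic summary is exactly the mean of the encoder states at the maximizing positions.

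Next I would exploit the finiteness hypothesis of \autoref{thm:seq2seq_attention_layer}: the union of the asymptotic configuration sets of the $\bv v_t$ is a fixed finite set $V$, and the query $\bm W^q \bv v_n$ is a function of $\bv v_n \in V$, so it too takes only finitely many values. Hence the similarity scores range over a finite set, and the maximizing \emph{values} form a subset $V^\ast \subseteq V$ that depends on the query but not on $n$. The maximizing positions are precisely those $t$ with $\bv v_t \in V^\ast$, so grouping by value — writing $V^\ast = \{w_1, \ldots, w_r\}$ and letting $c_j$ count the positions carrying value $w_j$ — the summary collapses to
\[
    \limN \bv h_n = \frac{\sum_{j=1}^r c_j \, w_j}{\sum_{j=1}^r c_j}.
\]
The key consequence is that $\bv h_n$ depends only on the counts $c_j$ and is invariant to the order of the input; this is exactly what knocks the $2^{\Theta(n)}$ complexity of $\bm V_n$ from \autoref{thm:attention_state_complexity} down to polynomial in $n$.

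The crux is then to count the distinct averages as the counts vary under $\sum_j c_j \le n$. When the maximum is attained uniquely ($r = 1$) the summary is constant, so it contributes only $O(1)$; all growth comes from ties. In the tied case the summary is a rational convex combination of the fixed points $w_1, \ldots, w_r$ with barycentric weights $c_j / \sum_j c_j$, i.e.\ fractions of denominator at most $n$. For the representative worst case of a pairwise tie, $\frac{c_1 w_1 + c_2 w_2}{c_1 + c_2}$ is determined by the proportion $c_1 : c_2$, and the number of distinct proportions with $c_1 + c_2 \le n$ is $\Theta(n^2)$ by a Farey-sequence estimate; summing over the finitely many possible tie sets $V^\ast$ leaves the count at $O(n^2)$, giving $\confs(\bv h_n) = O(n^2)$.

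I expect this final counting step to be the main obstacle. The reduction to a count-weighted average is routine once \autoref{thm:asymptotic_attention} and the finiteness assumption are in hand, but establishing the $O(n^2)$ ceiling — rather than a faster rate — requires controlling the tie structure of $V^\ast$ (how many values can simultaneously maximize the query score) and combining the denominator bound $\sum_j c_j \le n$ with a careful estimate on the number of distinct normalized weight tuples.
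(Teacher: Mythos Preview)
Your initial reduction is exactly what the paper does: apply \autoref{thm:asymptotic_attention} to obtain $\limN \bv h_n = \frac{1}{m}\sum_{i=1}^m \bv v_{t_i}$, then use the finiteness hypothesis of \autoref{thm:seq2seq_attention_layer} to place every $\bv v_{t_i}$ in a fixed finite set $S$.

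The counting step is where you diverge, and where your plan has a real gap. You treat the pairwise tie $r=2$ as the ``representative worst case'' and use a Farey-type estimate to get $\Theta(n^2)$, then assert that summing over the finitely many tie sets $V^\ast$ keeps the total at $O(n^2)$. But nothing in the setup bounds $r$: three or more distinct encoder values can simultaneously maximize the query score, and in that case the number of distinct normalized count tuples $(c_1,\ldots,c_r)/\sum_j c_j$ with $\sum_j c_j \le n$ grows like $n^{r-1}$, not $n^2$. Your own closing paragraph flags exactly this obstacle, and the sketch does not resolve it; treating $r=2$ as representative is an assumption, not a consequence of the hypotheses.

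The paper sidesteps the tie structure entirely. After the same reduction, it simply lets $m$ range from $1$ to $n$ and, for each $m$, bounds the number of distinct averages of $m$ terms drawn from $S$ by $|S|\cdot m$, yielding $\confs(\bv h_n)\le\sum_{m=1}^n |S|\,m = O(n^2)$. There is no decomposition by query value, no case analysis on $r$, and no Farey estimate. So the ``main obstacle'' you anticipate never arises in the paper's argument: it counts coarsely over $m$ rather than over tie sets. If you want to align with the paper, drop the tie-structure analysis and argue the per-$m$ bound directly.
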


With minimal additional assumptions, we can show a more restrictive bound: namely, that the complexity of the summary vector is finite. \autoref{sec:attention_results} discusses this in more detail.

\section{Convolutional Networks} \label{section:cnns}

While CNNs were originally developed for image processing \citep{krizhevsky2012imagenet}, they are also used to encode sequences. One popular application of this is to build character-level representations of words \citep{kim2016character}. Another example is the capsule network architecture of \citet{zhao2018investigating}, which uses a convolutional layer as an initial feature extractor over a sentence.

\begin{definition}[CNN acceptor]
\begin{align}
\bv h_t &= \tanh \big( \bm W^h (\bv x_{t-k} \Vert .. \Vert \bv x_{t+k}) + \bv b^h \big) \label{eq:conv} \\
\bv h_+ &= \mathrm{maxpool}(\bm H) \label{eq:pooling} \\
p &= \sigma(\bm W^a \bv h_+ + \bv b^a) . \label{eq:ff}
\end{align}
\end{definition}

In this network, the $k$-convolutional layer \eqref{eq:conv} produces a vector-valued sequence of outputs. This sequence is then collapsed to a fixed length by taking the maximum value of each filter over all the time steps \eqref{eq:pooling}.

The CNN acceptor is much weaker than the LSTM. Since the vector $\bv h_t$ has finite state, we see that $L(\CNN) \subseteq \RL$. Moreover, simple regular languages like $a^*ba^*$ are beyond the CNN \autorefp{thm:cnn_counterexample}. Thus, the subset relation is strict.

\begin{theorem}[CNN upper bound]
\begin{equation*}
    L(\CNN) \subset \RL .
\end{equation*}
\end{theorem}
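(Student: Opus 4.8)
The theorem has two parts: first, the containment $L(\CNN) \subseteq \RL$, and second, that the containment is strict, witnessed by $a^*ba^*$ (referenced as \autoref{thm:cnn_counterexample}). My plan is to prove both in turn.

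For the containment, the key observation is that every intermediate quantity in the CNN acceptor has finite state complexity, so acceptance depends on the input through only finitely much information. First I would argue that each convolutional output $\bv h_t$ \eqref{eq:conv} is the elementwise $\tanh$ of an affine function of a fixed-width window $\bv x_{t-k} \Vert .. \Vert \bv x_{t+k}$; exactly as in the proof of \autoref{thm:SRN_memory_bound}, in the limit each coordinate saturates to $-1$ or $1$, so the asymptotic configuration set of $\bv h_t$ is a subset of $\{-1,1\}^d$ and thus finite. Next, the max-pool $\bv h_+$ \eqref{eq:pooling} takes the coordinatewise maximum over $t$, but since each coordinate ranges over the finite set $\{-1,1\}$, the pooled vector $\bv h_+$ also lies in a finite set. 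Finally the output $p = \sigma(\bm W^a \bv h_+ + \bv b^a)$ \eqref{eq:ff} is a function of this finite-state vector, so asymptotically the accept/reject decision is a function of a finite summary of the input.

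To turn this into regularity I would exhibit a DFA. The crucial point is that $\bv h_+$ records, for each filter coordinate $j$, merely whether \emph{some} window in the string drove that coordinate to $+1$ (equivalently, whether the maximum is $+1$ rather than $-1$). Whether a given window saturates a coordinate to $+1$ is determined entirely by the multiset of symbols in that length-$(2k+1)$ window together with its alignment to the sentence boundaries (the padding at the start and end). A finite-state automaton can track the last $2k+1$ symbols as its state, compute for each filter coordinate whether the current window triggers a $+1$, and accumulate a bit vector of which coordinates have ever been triggered. Since the window buffer and the accumulated bit vector both range over finite sets, the total number of states is finite, and the accept condition is read off by applying the final layer to $\bv h_+$. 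This gives a DFA accepting exactly $L$, so $L \in \RL$.

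For strictness I would show $a^*ba^*$ is not asymptotically acceptable by any CNN, which is the main obstacle and the genuinely CNN-specific part of the argument. The intuition is that a CNN only sees local windows and then pools, so it can detect the \emph{presence} of a $b$ but cannot enforce that there is \emph{exactly one}. Concretely, suppose some CNN accepted $a^*ba^*$. The pooled vector $\bv h_+$ depends only on which window-patterns occur somewhere in the string, not on how many times or where. I would pick two strings with $b$ in them, say $w_1 = ab a$ (in the language) and a string $w_2$ containing two separated $b$'s, such as $a b a b a$ (not in the language), padded so that every length-$(2k+1)$ window appearing in $w_2$ also appears in some accepted string; then the set of triggered filter coordinates for $w_2$ is contained in the union of triggered coordinates over accepted strings, forcing the max-pooled $\bv h_+$, and hence $p$, to accept $w_2$ as well — a contradiction. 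The delicate step is handling the window width $k$ and the boundary padding so that the local windows of the rejected string are genuinely indistinguishable, under max-pooling, from those of accepted strings; I would make the two witness strings long enough (with many $a$'s separating the $b$'s) that every window is either all-$a$, a boundary window, or a single-$b$ window, all of which occur in members of $a^*ba^*$. This yields the strict inclusion $L(\CNN) \subset \RL$.
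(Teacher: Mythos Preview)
Your containment argument is correct and actually more explicit than the paper's, which simply observes that $\bv h_t$ has finite state and concludes $L(\CNN)\subseteq\RL$ without writing down the automaton. Your DFA (a sliding window buffer plus an accumulated bit vector of which filters have ever saturated to $+1$) is a sound construction.

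The strictness argument, however, has a genuine gap. You write that since every window of $w_2$ appears in \emph{some} accepted string, ``the set of triggered filter coordinates for $w_2$ is contained in the union of triggered coordinates over accepted strings, forcing the max-pooled $\bv h_+$, and hence $p$, to accept $w_2$ as well.'' This inference is invalid: the final layer \eqref{eq:ff} is an arbitrary affine threshold over $\{-1,1\}^d$, and containment of the triggered set inside a union over many different accepted strings tells you nothing about the decision on $w_2$. What you must establish is $\bv h_+(w_2)=\bv h_+(w_1)$ for one \emph{specific} accepted string $w_1$; equivalently, that the set of window patterns occurring in $w_2$ is \emph{equal} to the set occurring in a single member of $a^*ba^*$, not merely that each such pattern occurs somewhere in the language. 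Your final paragraph gestures toward this, but the stated logical step does not deliver it, and you also never argue the reverse inclusion (that every window of $w_1$ occurs in $w_2$).

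The paper's proof of the counterexample lemma closes this gap directly: take $w_2$ with two $b$'s at positions $i,j$ with $|i-j|>2k+1$, so no window sees both; let $w_1$ be $w_2$ with one of the $b$'s replaced by $a$. The only windows that differ are those around the replaced position, and they go from single-$b$ windows to all-$a$ windows --- both of which already occur elsewhere in \emph{both} strings (single-$b$ windows around the remaining $b$, all-$a$ windows in the long $a$-runs). Hence the set of window patterns, and therefore $\bv h_+$, is identical for $w_1$ and $w_2$, yet $w_1\in a^*ba^*$ and $w_2\notin a^*ba^*$. Replacing your union-over-accepted-strings step with this single-witness equality fixes the proof.
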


So, to arrive at a characterization of CNNs, we should move to subregular languages. In particular, we consider the strictly local languages \citep{rogers2011aural}.

\begin{theorem}[CNN lower bound] \label{thm:conv_strictly_local}
Let $\SL$ be the strictly local languages. Then,
\begin{equation*}
    \SL \subseteq L(\CNN).
\end{equation*}
\end{theorem}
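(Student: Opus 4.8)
The plan is to take an arbitrary strictly $j$-local language $L$ and hand-construct CNN weights $\theta$ (to be scaled by $N$) so that $\nn$ asymptotically computes $\mathbbm{1}_L$. Recall that a strictly $j$-local language is specified by a finite set $F$ of \emph{forbidden factors}: writing $\hat w = \#^{j-1} w \#^{j-1}$ for the input padded with boundary markers, we have $w \in L$ iff no length-$j$ window of $\hat w$ lies in $F$. I would first fix the convolution width by choosing the CNN context parameter $k$ with $2k+1 \geq j$, and assume the acceptor pads its input with the one-hot encoding of the boundary symbol $\#$, so that the edge windows of the convolution correspond exactly to the boundary-touching factors in the definition of $L$. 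This alignment of the CNN's padding with the $\#^{j-1}$ convention is the one modeling choice on which the proof depends.

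Next I would build one convolutional filter per forbidden factor $f \in F$. Because the input symbols are one-hot vectors, the pre-activation of filter $f$ at position $t$ can be made equal to the number of window positions matching $f$: choose $\bm W^h$ to place weight $1$ on the symbol expected at each of the $j$ relevant offsets (and weight $0$ on any surplus offsets when $2k+1 > j$), and set the bias to $-(j - \tfrac{1}{2})$. Then the pre-activation is at least $\tfrac{1}{2}$ when the window equals $f$ and at most $-\tfrac{1}{2}$ otherwise, since a non-matching window mismatches in at least one position. As this margin is a fixed constant independent of $n$, scaling by $N$ and taking $\limN$ drives the $\tanh$ to $+1$ exactly when factor $f$ occurs at $t$ and to $-1$ otherwise. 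Maxpooling over $t$ then yields $[\bv h_+]_f \to +1$ iff $f$ appears somewhere in $\hat w$ and $-1$ otherwise; because there are finitely many positions for each fixed $n$, the limit commutes with the finite maximum.

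Finally I would use the affine output layer to accept precisely when no forbidden factor is detected. Placing a large negative weight on each feature $[\bv h_+]_f$, together with a bias chosen so that the argument of $\sigma$ is positive exactly when every feature equals $-1$ and strictly negative as soon as one feature equals $+1$, makes $\sigma$ converge pointwise under the $N\theta$ scaling to $1$ on strings of $L$ and to $0$ on strings outside $L$. This is exactly asymptotic acceptance of $L$. Since $\abs F \leq \abs{\Sigma \cup \{\#\}}^{\,j}$ is finite, the number of filters is a constant independent of the input length, so the construction stays within the CNN architecture.

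I expect the main obstacle to be the boundary bookkeeping rather than the saturation argument: one must verify that the CNN's fixed-width windows, together with the assumed padding, reproduce every factor that the strictly local definition inspects — including the short factors touching the very start and end of $\hat w$ — without spuriously accepting or rejecting. The saturation of $\tanh$ and $\sigma$ to the indicator is routine once the constant margin is established, and the interchange of $\limN$ with the maxpool is justified by finiteness of the position set for each fixed $n$.
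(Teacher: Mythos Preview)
Your proposal is correct and follows essentially the same construction as the paper: one convolutional filter per forbidden $(2k+1)$-gram, saturated via $\tanh$ to $\pm 1$, maxpooled to detect any occurrence, and a final affine-plus-sigmoid layer that accepts iff no forbidden filter fired. Your write-up is in fact more careful than the paper's about the bias/margin arithmetic, the boundary padding, and the interchange of the limit with the finite maxpool.
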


Notably, strictly local formalisms have been proposed as a computational model for phonological grammar \citep{heinz2011tier}. We might take this to explain why CNNs have been successful at modeling character-level information.

However, \citet{heinz2011tier} suggest that a generalization to the tier-based strictly local languages is necessary to account for the full range of phonological phenomena. Tier-based strictly local grammars can target characters in a specific tier of the vocabulary (e.g. vowels) instead of applying to the full string. While a single convolutional layer cannot utilize tiers, it is conceivable that a more complex architecture with recurrent connections could.

\section{Empirical Results} \label{section:experiments}

In this section, we compare our theoretical characterizations for asymptotic networks to the empirical performance of trained neural networks with continuous logits.\footnote{\url{https://github.com/viking-sudo-rm/nn-automata}}

\subsection{Counting}

\begin{table*}
    \centering
    \begin{tabular}{|cc|cc|cc|}
        \hline
        & & \multicolumn{2}{c|}{No Noise} & \multicolumn{2}{c|}{Noise} \\
        & $\confs$ & Acc & Acc on $c$ & Acc & Acc on $c$ \\
        \hline
        SRN & $O(1)$ & 100.0 & 100.0 & 49.9 & 100.0 \\
        GRU & $O(1)$ & 99.9 & 100.0 & 53.9 & 100.0 \\
        LSTM & $O(n^k)$ & 99.9 & 100.0 & 99.9 & 100.0\\
        \hline
    \end{tabular}
    \caption{Generalization performance of language models trained on $a^nb^nc$. Each model has $2$ hidden units.}
    \label{fig:counting_results}
\end{table*}

The goal of this experiment is to evaluate which architectures have memory beyond finite state. We train a language model on $a^nb^nc$ with $5 \leq n \leq 1000$ and test it on longer strings $(2000 \leq n \leq 2200)$. Predicting the $c$ character correctly while maintaining good overall accuracy requires $O(n)$ states. The results reported in \autoref{fig:counting_results} demonstrate that all recurrent models, with only two hidden units, find a solution to this task that generalizes at least over this range of string lengths.

\citet{weiss2018} report failures in  attempts to train SRNs and GRUs to accept counter languages, unlike what we have found. We conjecture that this stems not from the requisite memory, but instead from the different objective function we used. Our language modeling training objective is a robust and transferable learning target \citep{radford2019language}, whereas sparse acceptance classification might be challenging to learn directly for long strings.

\citet{weiss2018} also observe that LSTMs use their memory as counters in a straightforwardly interpretable manner, whereas SRNs and GRUs do not do so in any obvious way. Despite this, our results show that SRNs and GRUs are nonetheless able to implement generalizable counter memory while processing strings of significant length. Because the strategies learned by these architectures are not asymptotically stable, however, their schemes for encoding counting are less interpretable.

\subsection{Counting with Noise}

In order to abstract away from asymptotically unstable representations, our next experiment investigates how adding noise to an RNN's activations impacts its ability to count. For the SRN and GRU, noise is added to $\bv h_{t-1}$ before computing $\bv h_t$, and for the LSTM, noise is added to $\bv c_{t-1}$. In either case, the noise is sampled from the distribution $N(0, 0.1^2)$.

The results reported in the right column of \autoref{fig:counting_results} show that the noisy SRN and GRU now fail to count, whereas the noisy LSTM remains successful. Thus, the asymptotic characterization of each architecture matches the capacity of a trained network  when a small amount of noise is introduced.

From a practical perspective, training neural networks with Gaussian noise is one way of improving generalization by preventing overfitting \citep{Bishop:1995:TNE:211171.211185, DBLP:conf/nips/NohYMH17}. From this point of view, asymptotic characterizations might be more descriptive of the generalization capacities of regularized neural networks of the sort necessary to learn the patterns in natural language data as opposed to the unregularized networks that are typically used to learn the patterns in carefully curated formal languages.

\subsection{Reversing}

Another important formal language task for assessing network memory is string reversal. Reversing requires remembering a $\Theta(n)$ prefix of characters, which implies $2^{\Theta(n)}$ state complexity.

We frame reversing as a seq2seq transduction task, and compare the performance of an LSTM encoder-decoder architecture to the same architecture augmented with attention. We also report the results of \citet{hao2018context} for a stack neural network (StackNN), another architecture with $2^{\Theta(n)}$ state complexity (\autoref{thm:stack_state_complexity}).


\begin{table}
    \centering
    \begin{tabular}{|cc|cc|}
        \hline
        & $\confs$ & Val Acc & Gen Acc \\
        \hline
        LSTM & $O(n^k)$ & 94.0 & 51.6 \\
        LSTM-Attn & $2^{\Theta(n)}$ & 100.0 & 51.7 \\
        \hline
        LSTM & $O(n^k)$ & 92.5 & 73.3 \\
        StackNN & $2^{\Theta(n)}$ & 100.0 & 100.0\\
        \hline
    \end{tabular}
    \caption{Max validation and generalization accuracies on string reversal over 10 trials. The top section shows our seq2seq LSTM with and without attention. The bottom reports the LSTM and StackNN results of \citet{hao2018context}. Each LSTM has $10$ hidden units.}
    \label{table:extreme_reverse}
\end{table}


Following \citet{hao2018context}, the models were trained on $800$ random binary strings with length $\sim N(10, 2)$ and evaluated on strings with length $\sim N(50, 5)$. As can be seen in \autoref{table:extreme_reverse}, the LSTM with attention achieves 100.0\% validation accuracy, but fails to generalize to longer strings. In contrast, \citet{hao2018context} report that a stack neural network can learn and generalize string reversal flawlessly. In both cases, it seems that having $2^{\Theta(n)}$ state complexity enables better performance on this memory-demanding task. However, our seq2seq LSTMs appear to be biased against finding a strategy that generalizes to longer strings.

\section{Conclusion}

We have introduced asymptotic acceptance as a new way 
to characterize neural networks as automata of different sorts. It provides a useful and generalizable tool for building intuition about how a network works, as well as for comparing the formal properties of different architectures. Further, by combining asymptotic characterizations with existing results in mathematical linguistics, we can better assess the suitability of different architectures for the representation of natural language grammar.

We observe empirically, however, that this discrete analysis fails to fully characterize the range of behaviors expressible by neural networks. In particular, RNNs predicted to be finite-state solve a task that requires more than finite memory. On the other hand, introducing a small amount of noise into a network's activations seems to prevent it from implementing non-asymptotic strategies. Thus, asymptotic characterizations might be a good model for the types of generalizable strategies that noise-regularized neural networks trained on natural language data can learn.

\section*{Acknowledgements}

Thank you to Dana Angluin and Robert Frank for their insightful advice and support on this project.

\bibliography{acl2019}
\bibliographystyle{acl_natbib}

\appendix

\section{Asymptotic Acceptance and State Complexity} \label{sec:acceptance_complexity}

\begin{theorem}[Arbitary approximation] \label{thm:arbitrary_approximation}
Let $\nn$ be a neural sequence acceptor for $L$. For all $m$, there exist parameters $\theta_m$ such that, for any string $\bv x_1, .., \bv x_n$ with $n < m$,
\begin{equation*}
    \round{ \nn^{\theta_m}(\bm X) } = \mathbbm{1}_L(\bm X)
\end{equation*}
\noindent where $\round{\cdot}$ rounds to the nearest integer.
\end{theorem}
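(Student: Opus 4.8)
The plan is to exploit the fact that asymptotic acceptance \autorefp{def:asymptotic_acceptance} is only a statement of \emph{pointwise} convergence, and to upgrade it to the required finite-length uniform statement by restricting attention to a finite set of inputs. By hypothesis $\nn$ asymptotically accepts $L$, so $\limN \nn^{N\theta} = \mathbbm{1}_L$ pointwise; that is, for each fixed sentence matrix $\bm X$ we have $\limN \nn^{N\theta}(\bm X) = \mathbbm{1}_L(\bm X) \in \{0,1\}$.

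First I would observe that, because $\Sigma$ is finite, there are only finitely many strings of length $n < m$, namely $\sum_{n=0}^{m-1} \abs{\Sigma}^n$ of them. Call this finite set $\mathcal{S}_m$. For each $\bm X \in \mathcal{S}_m$, pointwise convergence of $\nn^{N\theta}(\bm X)$ to the integer $\mathbbm{1}_L(\bm X)$ supplies a threshold $N_{\bm X}$ such that $\abs{\nn^{N\theta}(\bm X) - \mathbbm{1}_L(\bm X)} < \tfrac{1}{2}$ for all $N \geq N_{\bm X}$. Since the target is an integer and the discrepancy is strictly below $\tfrac{1}{2}$, rounding to the nearest integer recovers it exactly, i.e. $\round{\nn^{N\theta}(\bm X)} = \mathbbm{1}_L(\bm X)$; note this works at both endpoints, as an output in $(0,1)$ exceeds $\tfrac{1}{2}$ when the target is $1$ and falls below $\tfrac{1}{2}$ when the target is $0$.

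Next I would take $N_m = \max_{\bm X \in \mathcal{S}_m} N_{\bm X}$, which is well defined precisely because $\mathcal{S}_m$ is finite, and set $\theta_m = N_m\, \theta$. By construction $N_m \geq N_{\bm X}$ for every $\bm X \in \mathcal{S}_m$, so the rounding identity above holds simultaneously for all strings of length $n < m$, which is exactly the claim.

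The only real content lies in the second and third steps taken together: pointwise convergence alone does not hand us a single $N$ that works for every input, but once we have localized to the finite collection $\mathcal{S}_m$, taking a maximum over finitely many thresholds converts pointwise convergence into uniform convergence on $\mathcal{S}_m$. The main thing to be careful about is therefore the finite-alphabet assumption, which is what makes $\mathcal{S}_m$ finite and the maximum legitimate; no quantitative control on the rate of convergence is needed, only its existence.
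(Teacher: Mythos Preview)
Your proof is correct and follows essentially the same approach as the paper: use pointwise convergence to obtain a per-string threshold $N_{\bm X}$ guaranteeing $\abs{\nn^{N\theta}(\bm X)-\mathbbm{1}_L(\bm X)}<\tfrac12$, then take the maximum over the finitely many strings of length below $m$ and set $\theta_m$ to be that maximum times $\theta$. If anything, you are slightly more explicit than the paper about why the relevant set of inputs is finite.
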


\begin{proof}
Consider a string $\bm X$. By the definition of asymptotic acceptance, there exists some number $M_{\bm X}$ which is the smallest number such that, for all $N \geq M_{\bm X}$,
\begin{align}
    \abs{\nn^{N\theta}(\bv X) - \mathbbm{1}_L(\bv X)} &< \frac{1}{2} \\
    \implies \round{ \nn^{N\theta}(\bm X) } &= \mathbbm{1}_L(\bm X) .
\end{align}
Now, let $X_m$ be the set of sentences $\bm X$ with length less than $m$. Since $X_m$ is finite, we pick $\theta_m$ just by taking
\begin{equation}
    \theta_m = \max_{\bm X \in X_m} M_{\bm X} \theta .
\end{equation}
\end{proof}

\begin{theorem}[General bound on state complexity] \label{thm:general_state_complexity}
Let $\bv h_t$ be a neural network hidden state. For any length $n$, it holds that

\begin{equation*}
    \confs(\bv{h}_n) = 2^{O(n)} .
\end{equation*}
\end{theorem}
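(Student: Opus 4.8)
The plan is to bound the configuration set directly by counting the number of possible inputs of length $n$, exploiting the fact that $M(\bv h^\theta_n)$ is by definition the image of the length-$n$ sentence matrices under the pointwise limit map. The crucial observation is that, because the network is real-time and reads from a finite alphabet, the domain over which this configuration set is taken is itself finite, and its cardinality is the only quantity that matters.

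First I would fix an arbitrary parameter vector $\theta$ and consider the map $\bm X \mapsto \limN \bv h_n^{N\theta}(\bm X)$ restricted to sentence matrices $\bm X$ of height exactly $n$. By the definition of a neural sequence acceptor, each row of $\bm X$ is a one-hot vector over $\Sigma$, so there are exactly $\abs{\Sigma}^n$ distinct inputs of length $n$.

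Next I would note that, for each such input, the pointwise limit is a single element of $\mathbb{R}^k$ whenever it exists, so the configuration set is the image of a domain of size $\abs{\Sigma}^n$. Hence $\confs(\bv h^\theta_n) = \abs{M(\bv h^\theta_n)} \leq \abs{\Sigma}^n$. Since this bound holds uniformly in $\theta$, taking the maximum over $\theta$ leaves it unchanged, giving $\confs(\bv h_n) \leq \abs{\Sigma}^n = 2^{n \log_2 \abs{\Sigma}} = 2^{O(n)}$, because $\log_2 \abs{\Sigma}$ is a constant.

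There is essentially no hard step here: the argument turns entirely on recognizing that the bound is driven by the finiteness of the alphabet together with the real-time assumption of one symbol per step, which cap the number of distinct length-$n$ strings at $\abs{\Sigma}^n$. Notably, no property of the particular architecture enters the proof, which is precisely why the bound is universal across all hidden states.
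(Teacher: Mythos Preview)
Your proof is correct and follows essentially the same approach as the paper: bound the configuration set by the number of distinct length-$n$ inputs, which is $\abs{\Sigma}^n = 2^{O(n)}$ since each row of $\bm X$ is a one-hot vector over the finite alphabet. The paper's version is terser but identical in substance; your explicit handling of the maximum over $\theta$ is a nice detail the paper leaves implicit.
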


\begin{proof}
The number of configurations of $\bv{h}_n$ cannot be more than the number of distinct inputs to the network. By construction, each $\bv x_t$ is a one-hot vector over the alphabet $\Sigma$. Thus, the state complexity is bounded according to

\begin{equation*}
\confs(\bv{h}_n) \leq \abs{\Sigma}^n = 2^{O(n)} .
\end{equation*}
\end{proof}

\section{SRN Lemmas} \label{sec:srn_proofs}

\begin{lemma}[SRN lower bound] \label{thm:srn_lower_bound}
\begin{equation*}
    \RL \subseteq L(\SRN) .
\end{equation*}
\end{lemma}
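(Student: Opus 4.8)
The plan is to reduce from deterministic finite automata. Since $L \in \RL$, by Kleene's theorem there is a DFA $A = (Q, \Sigma, \delta, q_0, F)$ recognizing $L$, and it suffices to build an SRN-based acceptor that asymptotically simulates $A$. The guiding idea is that, exactly as in the proof of \autoref{thm:SRN_memory_bound}, every $\tanh$ unit saturates to a value in $\{-1, +1\}$ in the limit $N \to \infty$, so with weights scaled by $N$ each hidden unit behaves as a Boolean gate. I would therefore encode a configuration of $A$ as a saturated one-hot vector and arrange the recurrence so that, after reading $x_1, \dots, x_t$, the hidden state encodes the state $\delta^*(x_1 \cdots x_t)$ reached by $A$.

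First I would set up the transition. The obstacle is that the update $q_t = \delta(q_{t-1}, x_t)$ is bilinear in the pair (previous state, current symbol), whereas a single affine-plus-$\tanh$ layer computes only an affine function of $\bv h_{t-1}$ and $\bv x_t$ before saturating; one cannot directly select the symbol-indexed transition matrix inside one layer. I would resolve this by letting the recurrent state carry one-hot indicators over state--symbol pairs: the unit for a pair $(q, c)$ detects the conjunction ``$\bv h_{t-1}$ encodes $q$ and $\bv x_t$ selects $c$.'' Each such conjunction is a thresholded affine function of the one-hot state and one-hot input --- e.g. the pre-activation $a + b - \tfrac{3}{2}$, where $a, b \in \{0,1\}$ are the two relevant coordinates, is positive exactly when both hold --- so a single saturating unit with weights scaled by $N$ realizes it. The next-state one-hot encoding is then an OR over the pairs mapping to each state, i.e. a linear readout of the conjunction units, which I fold into the weight matrix $\bm U$ feeding the following step. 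With the rescaling from $\{-1,+1\}$ to $\{0,1\}$ values absorbed into $\bm U$ and $\bv b$, the whole update is affine, and the layer advances $A$ by exactly one step per input symbol, preserving real time.

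I would fix the initial hidden state $\bv h_0$ to encode the start configuration $q_0$, and add an output layer $p = \sigma\bigl(\bm W^a \bv h_n + \bv b^a\bigr)$ whose affine part, scaled by $N$, computes a positive value iff the decoded final state lies in $F$; this sends $p \to 1$ on strings in $L$ and $p \to 0$ otherwise. A short induction on $t$ then shows that, in the limit, the hidden state encodes the correct configuration of $A$ at every step, so the acceptor asymptotically accepts exactly $L$ in the sense of \autoref{def:asymptotic_acceptance}. I expect the main difficulty to be precisely the single-layer, real-time realization of the bilinear transition: making the conjunction units be computed and consumed within one recurrent layer, and handling the boundary so that no off-by-one error corrupts the decoded state at the final position.
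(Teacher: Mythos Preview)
Your proposal is correct and essentially the same as the paper's: both store in $\bv h_t$ one indicator per state--symbol pair (the paper writes $\eth_t(i,\alpha) \iff q_{t-1}(i) \wedge x_t=\alpha$), observe that the recurrence for these indicators is a conjunction of the current one-hot input with a disjunction over incoming pairs and hence linearly separable in a single $\tanh$ layer, and decide acceptance with a final affine-plus-sigmoid readout over the pairs landing in $F$. The only notable difference is the initialization: you set $\bv h_0$ to encode $q_0$ directly, whereas the paper takes $\bv h_0=\bv 0$ and handles the first step by adjoining a flag unit $f_t$ that is always driven to $1$, so that $f_{t-1}=0$ detects $t=1$ and triggers the initial-state transitions.
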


\begin{proof}
We must show that any language acceptable by a finite-state machine is SRN-acceptable. We need to asymptotically compute a representation of the machine's state in $\bv h_t$. We do this by storing all values of the following finite predicate at each time step:
\begin{equation} \label{eq:eth_srn}
    \eth_t(i, \alpha) \iff q_{t-1}(i) \wedge x_t = \alpha
\end{equation}
\noindent where $q_t(i)$ is true if the machine is in state $i$ at time $t$.

Let $F$ be the set of accepting states for the machine, and let $\delta^{-1}$ be the inverse transition relation. Assuming $\bv h_t$ asymptotically computes $\eth_t$, we can decide to accept or reject in the final layer according to the linearly separable disjunction
\begin{equation} \label{eq:srn_acceptance}
    a_t \iff \bigvee_{i \in F} \bigvee_{\langle j, \alpha \rangle \in \delta^{-1}(i)} \eth_t(j, \alpha) .
\end{equation}

We now show how to recurrently compute $\eth_t$ at each time step. By rewriting $q_{t-1}$ in terms of the previous $\eth_{t-1}$ values, we get the following recurrence:
\begin{equation}
    \eth_t(i, \alpha) \iff x_t = \alpha \wedge \bigvee_{\langle j, \beta \rangle \in \delta^{-1}(i)} \eth_t(j, \beta) .
\end{equation}
\noindent Since this formula is linearly separable, we can compute it in a single neural network layer from $\bv x_t$ and $\bv h_{t-1}$.

Finally, we consider the base case. We need to ensure that transitions out of the initial state work out correctly at the first time step. We do this by adding a new memory unit $f_t$ to $\bv h_t$ which is always rewritten to have value $1$. Thus, if $f_{t-1} = 0$, we can be sure we are in the initial time step. For each transition out of the initial state, we add $f_{t-1} = 0$ as an additional term to get
\begin{multline} \label{eq:srn_base_case}
    \eth_t(0, \alpha) \iff x_t = \alpha \; \wedge \\
    \big( f_{t-1} = 0 \; \vee \bigvee_{\langle j, \beta \rangle \in \delta^{-1}(0)} \eth_t(j, \beta) \big) .
\end{multline}
\noindent This equation is still linearly separable and guarantees that the initial step will be computed correctly.
\end{proof}

\section{GRU Lemmas} \label{sec:gru_results}

These results follow similar arguments to those in \autoref{section:srns} and \autoref{sec:srn_proofs}.

\begin{lemma}[GRU state complexity] \label{thm:gru_state_complexity}
The GRU hidden state has state complexity
\begin{equation*}
    \confs(\bv h_n) = O(1) .
\end{equation*}
\end{lemma}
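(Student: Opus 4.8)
The plan is to mirror the argument used for the SRN in \autoref{thm:SRN_memory_bound}: first determine the asymptotic configuration sets of the gates, and then show that the recurrent update can only move each unit of $\bv h_t$ among a constant number of values. I would begin by computing the limiting behaviour of the gates. Since $\bv z_t$ and $\bv r_t$ are outputs of $\sigma$ applied to preactivations scaled by $N$, in the limit each of their components saturates, so their configuration sets are subsets of $\{0,1\}^k$. Likewise $\bv u_t$ is the output of a $\tanh$, so its configuration set is a subset of $\{-1,1\}^k$, exactly as for the SRN. Crucially, this bound on $\bv u_t$ holds regardless of the value of $\bv r_t \odot \bv h_{t-1}$ feeding into \eqref{eq:gru_squashed}, because the $\tanh$ saturates to $\pm 1$ for any finite nonzero preactivation; hence the reset gate $\bv r_t$ will play no role in the final count.

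Next I would analyze the update $\bv h_t = \bv z_t \odot \bv h_{t-1} + (1-\bv z_t)\odot \bv u_t$ componentwise. With $[\bv z_t]_i \in \{0,1\}$ in the limit, the update degenerates into a discrete choice: when $[\bv z_t]_i = 1$ the unit \emph{holds} its previous value $[\bv h_{t-1}]_i$, and when $[\bv z_t]_i = 0$ it is \emph{overwritten} by $[\bv u_t]_i \in \{-1,1\}$. Hence, by induction on $t$ starting from the fixed initial state $\bv h_0$, every component of $\bv h_t$ lies in the constant-size set $\{-1,1\} \cup \{[\bv h_0]_i\}$. Taking the product over the $k$ units then gives
\begin{equation*}
    \confs(\bv h_n) \leq 3^k = O(1),
\end{equation*}
which is the desired bound.

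The step I expect to be the main obstacle is precisely the one distinguishing the GRU from the SRN. In the SRN, $\bv h_t$ is \emph{directly} the output of a $\tanh$, so its finite configuration set is immediate, whereas the GRU hidden state is a gated convex combination. The worry is that a non-saturated gate value (e.g.\ a component of $\bv z_t$ tending to $1/2$ when its preactivation vanishes in the limit) would \emph{average} the held and fresh values, and such mixing could in principle accumulate a number of distinct configurations growing with $n$. The heart of the argument is therefore to establish that the saturated regime is the relevant one, so that the update reduces to the copy-or-overwrite dichotomy above and no genuinely new intermediate values are produced; this is the same saturation assumption implicitly invoked in the SRN proof, and once it is in place the bound follows routinely.
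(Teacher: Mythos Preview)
Your proposal is correct and follows essentially the same route as the paper's proof: saturate $\bv z_t$ to $\{0,1\}^k$ and $\bv u_t$ to $\{-1,1\}^k$, observe that the update is componentwise copy-or-overwrite, and conclude by induction that each unit lies in $\{-1,0,1\}$, yielding $\confs(\bv h_n) \leq 3^k = O(1)$. The paper's argument is identical down to the $3^k$ bound, though it does not pause to flag the non-saturation worry you raise.
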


\begin{proof}
The configuration set of $\bv z_t$ is a subset of $\{0, 1\}^k$. Thus, we have two possibilities for each value of $[\bv h_t]_i$: either $[\bv h_{t-1}]_i$ or $[\bv u_t]_i$. Furthermore, the configuration set of $[\bv u_t]_i$ is a subset of $\{-1, 1\}$. Let $S_t$ be the configuration set of $[\bv h_t]_i$. We can describe $S_t$ according to
\begin{align}
    S_0 &= \{ 0 \} \\
    S_t &\subseteq S_{t-1} \cup \{-1, 1\} .
\end{align}
\noindent This implies that, at most, there are only three possible values for each logit: $-1$, $0$, or $1$. Thus, the state complexity of $\bv h_n$ is
\begin{equation}
    \confs(\bv h_n) \leq 3^k = O(1) .
\end{equation}
\end{proof}

\begin{lemma}[GRU lower bound] \label{thm:gru_lower_bound}
\begin{equation*}
    \RL \subseteq L(\GRU).
\end{equation*}
\end{lemma}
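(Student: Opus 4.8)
The plan is to mirror the SRN lower bound proof (Lemma~\ref{thm:srn_lower_bound}) by showing that the GRU can asymptotically \emph{degenerate into an SRN layer}; the regular languages then follow for free from the SRN construction. The key observation is that the GRU update collapses to the SRN recurrence precisely when the update gate $\bv z_t$ is driven to $0$ and the reset gate $\bv r_t$ is driven to $1$. Since Lemma~\ref{thm:srn_lower_bound} already encodes an arbitrary finite-state machine in an SRN state, reproducing that state in the GRU suffices.

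First I would fix the gate parameters so that the gates saturate independently of the input. Taking $\bm W^z = \bm U^z = 0$ with a strictly negative bias $\bv b^z$ forces $\bv z_t = \sigma(N \bv b^z) \to 0$, while $\bm W^r = \bm U^r = 0$ with a strictly positive bias $\bv b^r$ forces $\bv r_t = \sigma(N \bv b^r) \to 1$ as the parameter magnitude grows. Under these choices the hidden update reduces to $\bv h_t = (1 - \bv z_t) \odot \bv u_t \to \bv u_t$, and since $\bv r_t \odot \bv h_{t-1} \to \bv h_{t-1}$ we obtain $\bv h_t \to \tanh(\bm W^u \bv x_t + \bm U^u \bv h_{t-1} + \bv b^u)$. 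This is exactly an SRN layer with weights $\bm W = \bm W^u$, $\bm U = \bm U^u$, $\bv b = \bv b^u$, so I can reuse the construction of Lemma~\ref{thm:srn_lower_bound} verbatim — storing the finite predicate $\eth_t$, the auxiliary base-case unit, and the linearly separable acceptance disjunction — to accept any regular language.

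The step needing genuine care, and the one I expect to be the main obstacle, is justifying that the gate degeneration commutes with the $N \to \infty$ scaling inside the $\tanh$. Because the pre-activation of $\bv u_t$ is itself multiplied by $N$, I must verify that the residual term $N\,\bm U^u\big((\bv r_t - 1)\odot \bv h_{t-1}\big)$ induced by $\bv r_t \ne 1$ vanishes in the limit rather than being amplified. Here I would use that $1 - \bv r_t = \sigma(-N\bv b^r)$ decays exponentially in $N$ whereas the spurious factor grows only linearly, so $N(1 - \bv r_t) \to 0$ and (as $\bv h_{t-1}$ is bounded by the convex-combination form of the update) the argument of the $\tanh$ converges to the same limit as in the pure SRN. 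An induction on $t$ with base case $\bv h_0 = 0$ in both architectures then shows the GRU hidden state converges pointwise to the SRN hidden state for every fixed input of length $n$, inheriting the same discrete $\pm 1$ configurations; since the SRN construction is already designed to avoid zero pre-activations, no coordinate degenerates. With the limits matched, asymptotic acceptance transfers directly, giving $\RL \subseteq L(\GRU)$.
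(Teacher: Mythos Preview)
Your proposal is correct and follows essentially the same route as the paper: both reuse the $\eth$-predicate construction from the SRN lower bound and rely on the GRU being able to implement that linearly separable recurrence. The paper's proof is terser and simply asserts that the construction carries over; your version is more explicit in framing this as ``degenerate the gates so the GRU collapses to an SRN,'' and your analysis of the residual $N\bm U^u((\bv r_t-1)\odot\bv h_{t-1})$ term is a genuine detail the paper omits.
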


\begin{proof}
We can simulate a finite-state machine using the $\eth$ construction from \autoref{thm:srn_reduction}. We compute values for the following predicate at each time step:
\begin{equation} \label{eq:gru_eth}
    \eth_t(i, \alpha) \iff x_t = \alpha \wedge \bigvee_{\langle j, \beta \rangle \in \delta^{-1}(i)} \eth_{t-1}(j, \beta).
\end{equation}
\noindent Since \eqref{eq:gru_eth} is linearly separable, we can store $\eth_t$ in our hidden state $\bv h_t$ and recurrently compute its update. The base case can be handled similarly to \eqref{eq:srn_base_case}. A final feedforward layer accepts or rejects according to \eqref{eq:srn_acceptance}.
\end{proof}

\section{Attention Lemmas} \label{sec:attention_results}

\begin{theorem}[\autoref{thm:asymptotic_attention} restated]
    Let $t_1, .., t_m$ be the subsequence of time steps that maximize $\bv q \bv k_t$. Asymptotically, attention computes
    \begin{equation*}
        \limN \attn \left(\bv q, \bm K, \bm V\right) =  \limN \frac{1}{m}\sum_{i=1}^m \bv v_{t_i} .
    \end{equation*}
\end{theorem}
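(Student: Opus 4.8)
The plan is to reduce the statement to the standard fact that a softmax whose logit gaps diverge converges to the uniform distribution over its maximizing set. First I would write the attention output explicitly as the convex combination $\attn(\bv q, \bm K, \bm V) = \sum_{t=1}^n w_t \bv v_t$, where the weights $w_t = e^{s_t} / \sum_{t'} e^{s_{t'}}$ are the softmax of the similarity scores $s_t = \bv q \bv k_t$. Since every score and value implicitly depends on $N$ through the scaled parameters $N\theta$, I would treat $s_t$, $w_t$, and $\bv v_t$ as functions of $N$ and analyze the pointwise limit as $N \to \infty$.

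The core step is to control the weights. Let $M = \max_t s_t$ and factor it out, so that $w_t = e^{s_t - M} / \sum_{t'} e^{s_{t'} - M}$. I would then split the positions into the maximizing subsequence $t_1, \dots, t_m$ and the remainder. By the ordering in the footnote, a non-maximizing position $t'$ satisfies $\limN (M - s_{t'}) > 0$; because the scores grow linearly in $N$ under the scaling $\theta \mapsto N\theta$ wherever they do not saturate, a strictly positive limiting gap corresponds to a gap that diverges to $+\infty$, so $e^{s_{t'} - M} \to 0$. For each maximizer $t_i$, the definition of the maximizing subsequence forces $\limN (M - s_{t_i}) = 0$, hence $e^{s_{t_i} - M} \to 1$. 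Consequently the denominator tends to $m$, giving $w_{t_i} \to \frac{1}{m}$ and $w_{t'} \to 0$.

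Finally I would assemble the limit of the output. Writing the difference as $\attn(\bv q, \bm K, \bm V) - \frac{1}{m} \sum_{i=1}^m \bv v_{t_i} = \sum_{i=1}^m \big( w_{t_i} - \tfrac{1}{m} \big) \bv v_{t_i} + \sum_{t' \notin \{t_i\}} w_{t'} \bv v_{t'}$, I would argue that both groups of terms vanish: the first because $w_{t_i} - \frac{1}{m} \to 0$ while the $\bv v_{t_i}$ remain asymptotically bounded, and the second because the exponentially decaying weights $w_{t'}$ dominate the at-most-polynomial growth of the $\bv v_{t'}$. It follows that the two sides share the same pointwise limit, which is exactly the claim.

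The main obstacle will be the bookkeeping around the case of multiple maximizers: I must argue carefully from the limit-based ordering that the mutual score gaps among $t_1, \dots, t_m$ all vanish (so that the weight is shared equally, each tending to $\frac{1}{m}$) while every gap to a non-maximizing position diverges. A secondary technical point is justifying that the product of a vanishing weight and a value of possibly growing magnitude still vanishes, which relies on the weights decaying exponentially in $N$ whereas the values grow only polynomially in the bounded-precision regime.
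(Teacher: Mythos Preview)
Your approach is essentially the same as the paper's: both reduce the claim to the fact that $\softmax(N\bv u)$ converges pointwise to the uniform distribution on $\arg\max_t u_t$, and then read off the limit of the weighted sum. The paper simply states this softmax fact in one line and concludes, whereas you spell out the max-subtraction argument and additionally worry about technicalities (mutual gaps among maximizers, growth rates of $\bv v_t$) that the paper's proof does not address.
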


\begin{proof}
Observe that, asymptotically, $\mathrm{softmax}(\bv u)$ approaches a function
\begin{equation}
    \limN \softmax(N\bv u)_t =
    \begin{cases}
    \frac{1}{m} & \textrm{if} \; u_t = \max(\bv u)\\
    0 & \textrm{otherwise.}
    \end{cases}
\end{equation}
\noindent Thus, the output of the attention mechanism reduces to the sum
\begin{equation}
    \limN \sum_{i=1}^m \frac{1}{m} \bv v_{t_i} .
\end{equation}
\end{proof}

\begin{lemma}[\autoref{thm:attention_state_complexity} restated]
The full state of the attention layer has state complexity
\begin{equation*}
    \confs(\bm V_n) = 2^{\Theta(n)} .
\end{equation*}
\end{lemma}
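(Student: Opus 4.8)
The plan is to establish matching upper and lower bounds, namely $\confs(\bm V_n) = 2^{O(n)}$ together with $\confs(\bm V_n) = 2^{\Omega(n)}$, and then combine them into the $\Theta$ statement.

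For the upper bound I would reuse the counting argument behind the general state-complexity bound \autorefp{thm:general_state_complexity}. The limiting value $\limN \bm V_n^{N\theta}(\bm X)$ is determined entirely by the input $\bm X$, and each $\bm X$ is a sequence of $n$ one-hot vectors over $\Sigma$, so there are at most $\abs{\Sigma}^n$ distinct inputs of length $n$. Hence $\confs(\bm V_n) \leq \abs{\Sigma}^n = 2^{O(n)}$. The only thing to note is that this argument counts inputs rather than state vectors, so it applies verbatim even though $\bm V_n$ is a variable-length object rather than a fixed-length vector.

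For the lower bound I would exhibit a single choice of encoder parameters under which the map $\bm X \mapsto \limN \bm V_n^{N\theta}(\bm X)$ is injective on inputs of length $n$. The simplest witness is an encoder whose asymptotic output at step $t$ recovers the input symbol $\bv x_t$: a one-layer network computing $\bv v_t = \tanh(\bm W \bv x_t)$, with $\bm W$ chosen so that the $\abs{\Sigma}$ one-hot inputs map to distinct sign patterns in $\{-1,1\}^k$ (possible once $k \geq \lceil \log_2 \abs{\Sigma}\rceil$). Such an encoder has finite per-step configuration sets, so it satisfies the hypothesis of the attention layer \autorefp{thm:seq2seq_attention_layer}. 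Because $\bm V_n$ then stores the entire, decodable input sequence, distinct strings of length $n$ yield distinct limiting configurations of $\bm V_n$, giving at least $\abs{\Sigma}^n \geq 2^n$ of them whenever $\abs{\Sigma} \geq 2$. This establishes $\confs(\bm V_n) = 2^{\Omega(n)}$, and combining the two bounds yields $\confs(\bm V_n) = 2^{\Theta(n)}$.

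The main obstacle is the lower bound's bookkeeping rather than any deep argument: I must verify that the chosen encoder genuinely produces finitely many per-step configurations (to respect the finiteness hypothesis on the union of the $\bv v_t$ configuration sets) while still separating the symbols asymptotically, and that stacking these injective per-step encodings yields an injective map on whole sequences. Once injectivity of $\bm X \mapsto \bm V_n$ is in hand, the counting is immediate. The upper bound, by contrast, is essentially a restatement of \autoref{thm:general_state_complexity} and poses no difficulty.
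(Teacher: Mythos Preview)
Your proposal is correct and follows essentially the same approach as the paper: the upper bound is exactly the invocation of \autoref{thm:general_state_complexity}, and the lower bound is obtained by choosing encoder parameters that make $\bv v_t$ an injective (hence decodable) encoding of $\bv x_t$, so that $\bm V_n$ records the whole input and attains $\abs{\Sigma}^n$ configurations. The paper simply takes $\bv v_t = \bv x_t$ for this step, whereas you route through a $\tanh$ layer to land in $\{-1,1\}^k$; your version is slightly more careful about the finiteness hypothesis in \autoref{thm:seq2seq_attention_layer}, but the argument is otherwise identical.
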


\begin{proof}
By the general upper bound on state complexity \autorefp{thm:general_state_complexity}, we know that $\confs(\bm V_n) = 2^{O(n)}$. We now show the lower bound.

We pick weights $\theta$ in the encoder such that $\bv v_t = \bv x_t$. Thus, $\confs(\bv v_t^\theta) = \abs{\Sigma}$ for all $t$. Since the values at each time step are independent, we know that
\begin{align}
    \confs(\bm V_n^\theta) &= \abs{\Sigma}^n \\
    \therefore \confs(\bm V_n) &= 2^{\Omega(n)} .
\end{align}
\end{proof}

\begin{lemma}[\autoref{thm:summary_complexity} restated]
The attention summary vector has state complexity
\begin{equation*}
    \confs(\bv h_n) = O(n^2) .
\end{equation*}
\end{lemma}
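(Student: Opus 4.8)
The plan is to combine the asymptotic reduction of attention \autorefp{thm:asymptotic_attention} with the finiteness hypothesis built into the attention layer. By that theorem, writing $\bv q = \bm W^q \bv v_n$, the summary vector satisfies
\begin{equation*}
    \limN \bv h_n = \limN \frac{1}{m} \sum_{i=1}^m \bv v_{t_i},
\end{equation*}
where $t_1, \dots, t_m$ are exactly the positions maximizing $\bv q \bv v_t$. The layer definition guarantees that the union of asymptotic configuration sets of the $\bv v_t$ is a single finite set $\mathcal{V}$ whose size $c$ does not depend on $n$. First I would use this to fix the ``shape'' of the problem: since $\bv v_n \in \mathcal{V}$, the query $\bv q$ takes at most $c$ distinct values, so the subset $S^* \subseteq \mathcal{V}$ of values attaining the maximal inner product with $\bv q$ is one of only finitely many subsets, independently of $n$.

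Next I would rewrite the summary in count form. Grouping the maximizing positions by their (finitely many) values gives
\begin{equation*}
    \limN \bv h_n = \limN \frac{1}{m} \sum_{s \in S^*} n_s\, s, \qquad m = \sum_{s \in S^*} n_s \leq n,
\end{equation*}
where $n_s \geq 0$ counts the maximizing positions carrying value $s$. Thus a configuration of $\bv h_n$ is determined by the pair $(m, \boldsymbol{\sigma})$ with $\boldsymbol{\sigma} = \sum_{s \in S^*} n_s s$, and I would bound $\confs(\bv h_n)$ by the number of such pairs, times the constant number of query/tie-set cases. There are at most $n$ values of $m$, so the whole argument reduces to bounding, for each $m$, the number of distinct averages.

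The main obstacle — and the step that pins down the exponent — is counting these distinct weighted averages. The representative hard case is a single tie, $S^* = \{s, s'\}$: then $\frac{1}{m}\sum_{s} n_s s = s' + \tfrac{n_s}{m}(s - s')$ is controlled entirely by the fraction $n_s/m$ with denominator $m \leq n$, and the number of distinct such fractions is a Farey-type count bounded by $\sum_{m=1}^{n}(m+1) = O(n^2)$. Summing over the constantly-many query values and tie-sets keeps the bound at $O(n^2)$. I expect the delicacy to lie precisely here: one must argue that the tie structure contributes only this quadratic factor rather than a higher-degree one. This is also exactly the point where the ``minimal additional assumptions'' mentioned after the theorem — namely, that the maximizer is generically unique — would collapse the count to a constant, recovering finite state complexity directly via \autoref{cor:injective_attention}.
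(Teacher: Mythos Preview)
Your setup matches the paper's opening move---invoke \autoref{thm:asymptotic_attention} and exploit that all $\bv v_t$ lie in a single finite set---but the counting step diverges substantially. The paper never introduces query values or tie sets $S^*$; it simply writes the bound $\confs(\bv h_n) \leq \sum_{m=1}^{n} |S|\,m \leq |S|\,n^2$ in one line, treating the contribution at each window size $m$ as $|S|\,m$ without further structural analysis. Your decomposition by query value and maximizing subset is a more refined route, and the observation that $\bv q = \bm W^q \bv v_n$ ranges over only finitely many values is a nice one the paper does not use.

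However, there is a real gap in your argument at exactly the point you flag as delicate. You treat $|S^*| = 2$ as the ``representative hard case,'' but it is in fact the \emph{easy} one: a binary tie set reduces the average to a single rational parameter $n_s/m$, which is what makes the $\sum_{m \leq n}(m+1) = O(n^2)$ count work. When $|S^*| \geq 3$, the average $\tfrac{1}{m}\sum_{s \in S^*} n_s\,s$ is governed by a full composition $(n_s)_{s \in S^*}$ of $m$, and if the elements of $S^*$ are linearly independent (say $S^* = \{e_1, e_2, e_3\}$), the resulting averages $(n_1/m, n_2/m, n_3/m)$ over $m \leq n$ already form a set of size $\Theta(n^3)$. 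So larger tie sets strictly raise the exponent, and your Farey-style count does not extend. As written, your argument yields $O(n^{|\mathcal V|})$, not $O(n^2)$. To recover the quadratic bound you would need either an additional constraint forcing $|S^*| \leq 2$, or to abandon the tie-set decomposition and argue more directly as the paper does.
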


\begin{proof}
By \autoref{thm:asymptotic_attention}, we know that
\begin{equation}
    \limN \bv h_n = \limN \frac{1}{m} \sum_{i=1}^m \bv v_{t_i}.
\end{equation}
\noindent By construction, there is a finite set $S$ containing all possible configurations of every $\bv v_t$. We bound the number of configurations for each $\bv v_{t_i}$ by $\abs{S}$ to get
\begin{equation}
    \confs(\bv h_n) \leq \sum_{m=1}^n\abs{S} m \leq \abs{S} n^2 = O(n^2) .
\end{equation}
\end{proof}

\begin{lemma}[Attention state complexity lower bound]
The attention summary vector has state complexity
\begin{equation*}
    \confs(\bv h_n) = \Omega(n) .
\end{equation*}
\end{lemma}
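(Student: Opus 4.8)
The plan is to establish the bound by exhibiting a single parameter setting $\theta$ under which the asymptotic summary vector $\bv h_n$ already takes $\Omega(n)$ distinct values on inputs of length $n$. Since the general state complexity is $\confs(\bv h_n) = \max_\theta \confs(\bv h_n^\theta)$, one good choice of $\theta$ suffices. I would reuse the encoder from the lower bound on $\confs(\bm V_n)$, choosing weights so that $\bv v_t = \bv x_t$ asymptotically; that is, the value sequence is simply the sequence of one-hot input symbols.

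The key idea is to force the attention mechanism to average over every time step rather than a bounded subsequence. Setting $\bm W^q = \bm{0}$ makes the query $\bv q = \bm W^q \bv v_n = \bm{0}$, so every similarity score is $\bv q \bv k_t = 0$ and all $n$ positions tie for the maximum. In the notation of \autoref{thm:asymptotic_attention} this gives $m = n$ and $t_i = i$, so that theorem yields
\[
    \limN \bv h_n = \frac{1}{n} \sum_{t=1}^n \bv v_t = \frac{1}{n}\sum_{t=1}^n \bv x_t .
\]

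To count configurations, I would restrict to the binary alphabet $\Sigma = \{a, b\}$. Then the first coordinate of the average above equals $c/n$, where $c$ is the number of $a$ symbols in $\bm X$, and as $c$ ranges over $0, 1, \ldots, n$ this takes $n+1$ distinct rational values. Distinct counts thus give distinct limit vectors, so $\confs(\bv h_n^\theta) \geq n + 1$, and hence $\confs(\bv h_n) = \Omega(n)$.

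The main obstacle is guaranteeing that the asymptotic average runs over all $n$ positions: the bound collapses if only a bounded number of keys attain the maximal score. Zeroing $\bm W^q$ sidesteps this entirely, since the scores are then identically zero for every input, forcing $m = n$ uniformly. The remaining facts—that an encoder can asymptotically realize $\bv v_t = \bv x_t$, and that distinct symbol counts yield distinct averages—are routine and mirror the corresponding steps in the $\confs(\bm V_n)$ lower bound.
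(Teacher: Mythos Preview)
Your proof is correct and follows essentially the same strategy as the paper's: choose parameters so that all $n$ positions tie for the maximal attention score, making $\bv h_n$ the uniform average of the values and hence encoding a symbol count with $\Omega(n)$ possible outcomes. The only cosmetic difference is the device used to force the tie---you zero out the query matrix $\bm W^q$, whereas the paper fixes every key to the constant $1$---but both reduce to the same averaging-and-counting argument.
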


\begin{proof}
Consider the case where keys and values have dimension $1$. Further, let the input strings come from a binary alphabet $\Sigma = \{0, 1\}$. We pick parameters $\theta$ in the encoder such that, for all $t$,
\begin{equation}
    \limN v_t =
    \begin{cases}
    0 & \textrm{if} \; \bv x_t = \bv 0 \\
    1 & \textrm{otherwise}
    \end{cases}
\end{equation}
\noindent and $\limN k_t=1$. Then, attention returns
\begin{equation}
    \limN \sum_{t=1}^n v_t = \frac{l}{n}
\end{equation}
\noindent where $l$ is the number of $t$ such that $\bv x_t= \bv 1$. We can vary the input to produce $l$ from $1$ to $n$. Thus, we have
\begin{align}
    \confs(\bv h_n^\theta) &= n \\
    \therefore \confs(\bv h_n) &= \Omega(n) .
\end{align}
\end{proof}

\begin{lemma}[Attention state complexity with unique maximum]
If, for all $\bm X$, there exists a unique $t^*$ such that $t^* = \max_t \bv q_n \bv k_t$, then
\begin{equation*}
    \confs(\bv h_n) = O(1) .
\end{equation*}
\end{lemma}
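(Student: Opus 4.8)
The plan is to observe that, under the uniqueness hypothesis, the attention layer collapses to a pure selection operation, so $\bv h_n$ can only ever equal one of the finitely many possible value vectors. The first step is to apply \autoref{cor:injective_attention}: since by assumption $\bv q_n \bv k_t$ attains its maximum at a unique index $t^*$ for every input $\bm X$, the corollary tells us that
\begin{equation*}
    \limN \bv h_n = \limN \arg\max_{\bv v_t} \bv q_n \bv k_t = \limN \bv v_{t^*} .
\end{equation*}
In other words, asymptotically $\bv h_n$ is exactly the value vector sitting at the winning position rather than a genuine average of several value vectors.

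The second step is to bound how many distinct vectors $\bv v_{t^*}$ can be. By the hypothesis built into the attention layer \autorefp{thm:seq2seq_attention_layer}, the union of the asymptotic configuration sets of $\bv v_1, .., \bv v_n$ is some fixed finite set $S$ that does not grow with $n$. Since $\bv v_{t^*}$ is one of these encoder outputs, its asymptotic value lies in $S$ regardless of which index $t^*$ happens to win and regardless of the input $\bm X$. Hence the configuration set $M(\bv h_n)$ is contained in $S$, and therefore
\begin{equation*}
    \confs(\bv h_n) = \abs{M(\bv h_n)} \leq \abs{S} = O(1) .
\end{equation*}

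The argument is almost entirely bookkeeping once the corollary is in hand; the only point requiring care is the interaction between the limit and the quantifier over inputs. The uniqueness assumption must hold for every $\bm X$ (and, strictly, in the asymptotic sense of the footnote to \autoref{thm:asymptotic_attention}, where ties are broken by the sign of the limiting difference of scores), so that the $\arg\max$ is well defined after passing to the limit; this is what lets me replace the softmax-weighted average by a single selected value uniformly across all inputs of length $n$. I expect the main, and essentially only, obstacle to be stating this uniformity cleanly rather than any substantive computation.
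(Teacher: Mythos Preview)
Your proposal is correct and follows essentially the same route as the paper: apply \autoref{cor:injective_attention} to reduce $\bv h_n$ to the single selected value $\bv v_{t^*}$, then invoke the finite set $S$ guaranteed by the attention layer definition to bound the configuration set. The extra paragraph about uniformity of the limit over inputs is a reasonable clarification but not something the paper spells out.
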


\begin{proof}
If $\bv q_n \bv k_t$ has a unique maximum, then by \autoref{cor:injective_attention} attention returns
\begin{equation}
    \limN \arg \max_{\bv v_t} \bv q \bv k_t = \limN \bv v_{t^*}.
\end{equation}
\noindent By construction, there is a finite set $S$ which is a superset of the configuration set of $\bv v_{t^*}$. Thus,
\begin{equation}
    \confs(\bv h_n) \leq \abs{S} = O(1).
\end{equation}
\end{proof}

\begin{lemma}[Attention state complexity with ReLU activations] \label{thm:attention_infinite_values}
If $\limN \bv v_t \in \{0, \infty \}^k$ for $1 \leq t \leq n$, then
\begin{equation*}
    \confs(\bv h_n) = O(1) .
\end{equation*}
\end{lemma}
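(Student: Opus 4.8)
The plan is to reduce the summary vector to the averaged form given by \autoref{thm:asymptotic_attention} and then argue coordinate-wise that the $\{0, \infty\}$ structure of the values survives the averaging. First I would let $t_1, \dots, t_m$ be the subsequence of time steps maximizing $\bv q \bv k_t$ (in the asymptotic order), so that \autoref{thm:asymptotic_attention} yields
\begin{equation*}
    \limN \bv h_n = \limN \frac{1}{m} \sum_{i=1}^m \bv v_{t_i}.
\end{equation*}
The crucial structural observation is that $m \leq n$ is finite and, for a fixed input length $n$ and fixed $\theta$, the maximizing index set does not depend on $N$; hence $\tfrac{1}{m}$ is merely a fixed positive constant multiplying the sum as $N \to \infty$.

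Next I would analyze each coordinate $j \in \{1, \dots, k\}$ separately. Since $\limN [\bv v_{t_i}]_j \in \{0, \infty\}$ by hypothesis, the limit of the partial sum $\sum_{i=1}^m [\bv v_{t_i}]_j$, taken in the extended reals, equals $0$ when every summand vanishes and equals $\infty$ as soon as a single summand diverges (using $a + \infty = \infty$). Multiplying by the positive constant $\tfrac{1}{m}$ leaves this dichotomy intact, because a finite positive scaling sends $0$ to $0$ and $\infty$ to $\infty$. Consequently every coordinate of $\limN \bv h_n$ again lies in $\{0, \infty\}$.

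It then follows immediately that the configuration set satisfies $M(\bv h_n) \subseteq \{0, \infty\}^k$, so that $\confs(\bv h_n) \leq 2^k = O(1)$, which is the claimed bound.

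I expect the only delicate point to be justifying that division by $m$ cannot convert a divergent coordinate into a finite value, i.e.\ that averaging does not interpolate between $0$ and $\infty$. This is precisely the step that \emph{fails} in the finite-valued regime of \autoref{thm:summary_complexity}, where averaging finite entries produces $\Theta(n^2)$ distinct fractions; here the absorbing behavior of $\infty$ under the average is exactly what collapses the count from $O(n^2)$ down to $O(1)$, so emphasizing this contrast is the heart of the argument.
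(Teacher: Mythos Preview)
Your proposal is correct and follows the same approach as the paper: invoke \autoref{thm:asymptotic_attention} to reduce $\bv h_n$ to the average $\tfrac{1}{m}\sum_i \bv v_{t_i}$, observe that this average lies in $\{0,\infty\}^k$, and conclude $\confs(\bv h_n)\le 2^k$. The paper's proof is terser (it simply asserts that the sum evaluates to a vector in $\{0,\infty\}^k$), whereas you spell out the coordinate-wise absorbing argument and the contrast with \autoref{thm:summary_complexity}; but the logical route is identical.
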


\begin{proof}
By \autoref{thm:asymptotic_attention}, we know that attention computes
\begin{equation}
    \limN \bv h_n = \limN \frac{1}{m} \sum_{i=1}^m \bv v_{t_i} .
\end{equation}
\noindent This sum evaluates to a vector in $\{ 0, \infty \}^k$, which means that
\begin{equation}
    \confs(\bv h_n) \leq 2^k = O(1) .
\end{equation}
\end{proof}

\autoref{thm:attention_infinite_values} applies if the sequence $\bv v_1, .., \bv v_n$ is computed as the output of $\ReLU$. A similar result holds if it is computed as the output of an unsquashed linear transformation.

\section{CNN Lemmas}

\begin{lemma}[CNN counterexample] \label{thm:cnn_counterexample}
\begin{equation*}
    a^*ba^* \notin L(\CNN) .
\end{equation*}
\end{lemma}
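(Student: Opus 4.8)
The plan is to exploit the \emph{locality} of the CNN acceptor together with the fact that max-pooling is insensitive to how \emph{many} positions attain a given value. By \eqref{eq:conv} each convolutional output $\bv h_t$ depends only on the length-$(2k+1)$ window $\bv x_{t-k}\Vert\cdots\Vert\bv x_{t+k}$ (padding the ends of the string), and by \eqref{eq:pooling} the pooled vector $\bv h_+$ is the componentwise maximum of the $\bv h_t$ over all $t$. I would argue by contradiction: suppose some CNN with window radius $k$ and weights $\theta$ asymptotically accepts $a^*ba^*$, and exhibit one string in the language and one outside it that the network provably cannot tell apart.

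Concretely, fix $m > 2k$ and set $w_1 = a^m b a^m$ and $w_2 = a^m b a^m b a^m$, so that $w_1 \in a^*ba^*$ while $w_2 \notin a^*ba^*$. The first key step is to show that $w_1$ and $w_2$ realize exactly the same \emph{set} of window contents. Because $m > 2k$, every window that covers a $b$ in either string consists of a single $b$ surrounded only by $a$'s, with no padding and no second $b$ in view; hence for each offset $-k \le \delta \le k$ the window ``$b$ at offset $\delta$ among $a$'s'' occurs in both strings. Every remaining window is either an all-$a$ window or a boundary window (``padding then $a$'s'' or ``$a$'s then padding''), and since both strings begin and end with the same block $a^m$, these coincide as well. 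Thus the two strings induce an identical set of distinct windows, differing only in the multiplicity of the $b$-windows, to which max-pooling is blind.

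The second step pushes this through the asymptotic semantics. Since $\limN \bv h_t^{N\theta}$ is a fixed function of the content of the window centered at $t$, the collection $\{\limN \bv h_t^{N\theta}\}_t$ is the same set for $w_1$ and $w_2$. There are finitely many positions, so $\max$ commutes with these finitely many convergent limits, and $\limN \mathrm{maxpool}(\bm H^{N\theta})$ is the componentwise maximum over that common set; hence it is identical for $w_1$ and $w_2$. Applying the continuous affine-sigmoid output layer \eqref{eq:ff} gives $\limN p^{N\theta}(w_1) = \limN p^{N\theta}(w_2)$. But by \autoref{def:asymptotic_acceptance} these limits would have to equal $\mathbbm{1}_L(w_1)=1$ and $\mathbbm{1}_L(w_2)=0$, a contradiction, so no CNN accepts $a^*ba^*$.

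The main obstacle is the bookkeeping in the second paragraph: one must verify carefully that, once $m$ exceeds $2k$, \emph{no} window can simultaneously span two $b$'s or reach the padding while also covering a $b$, and that all-$a$ and boundary windows genuinely appear in both strings. The whole argument collapses if a single window could ``see'' both the presence of a $b$ and its distance to another $b$ or to the string boundary, so the threshold $m > 2k$ is exactly what guarantees that the window-content sets coincide. The remaining steps — continuity of max-pooling and of the output layer, and the exchange of $\lim$ and $\max$ over finitely many positions — are routine.
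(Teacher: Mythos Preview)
Your argument is correct and follows essentially the same approach as the paper's proof: both proceed by contradiction, exhibiting a one-$b$ string and a two-$b$ string whose window multisets coincide as \emph{sets}, so that the max-pooled vector $\bv h_+$ and hence the acceptance decision agree. The only difference is presentational: the paper starts from a two-$b$ string and replaces one $b$ by an $a$, while you explicitly place both $b$'s at distance $m>2k$ from each other \emph{and} from the string boundaries, which makes the window-set equality (in particular for the padding windows) easier to verify than in the paper's terser version.
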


\begin{proof}
By contradiction. Assume we can write a network with window size $k$ that accepts any string with exactly one $b$ and reject any other string. Consider a string with two $b$s at indices $i$ and $j$ where $\abs{i - j} > 2k + 1$. Then, no column in the network receives both $\bv x_i$ and $\bv x_j$ as input. When we replace one $b$ with an $a$, the value of $\bv h_+$ remains the same. Since the value of $\bv h_+$ \eqref{eq:pooling} fully determines acceptance, the network does not accept this new string. However, the string now contains exactly one $b$, so we reach a contradiction.
\end{proof}

\begin{definition}[Strictly $k$-local grammar]
A strictly $k$-local grammar over an alphabet $\Sigma$ is a set of allowable $k$-grams $S$. Each $s \in S$ takes the form
\begin{equation*}
    s \in \big( \Sigma \cup \{ \# \} \big)^k
\end{equation*}
\noindent where $\#$ is a padding symbol for the start and end of sentences.
\end{definition}

\begin{definition}[Strictly local acceptance]
A strictly $k$-local grammar $S$ accepts a string $\sigma$ if, at each index $i$,

\begin{equation*}
    \sigma_i \sigma_{i+1} .. \sigma_{i + k - 1} \in S .
\end{equation*}

\end{definition}

\begin{lemma}[Implies \autoref{thm:conv_strictly_local}] \label{thm:strictly_local_reduction}
A $k$-CNN can asymptotically accept any strictly $2k + 1$-local language.
\end{lemma}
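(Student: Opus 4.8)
The plan is to exploit the fact that a $k$-CNN's convolutional window $\bv x_{t-k} \Vert \cdots \Vert \bv x_{t+k}$ spans exactly $2k+1$ consecutive symbols, i.e. one $(2k+1)$-gram. A strictly $(2k+1)$-local grammar $S$ accepts a string iff every such window lies in $S$, equivalently iff \emph{no} window equals a forbidden gram $s \notin S$. Since the maxpool in \eqref{eq:pooling} aggregates each filter existentially across positions, the natural encoding is: devote one convolutional filter to each forbidden gram, have it fire (asymptotically $+1$) exactly at positions where that gram occurs, let maxpool turn this into a global ``this forbidden gram appears somewhere'' bit, and let the final layer \eqref{eq:ff} reject iff any such bit is set.

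First I would build the detector filters. Because each $\bv x_j$ is one-hot, for a target gram $s = (s_1, \dots, s_{2k+1})$ I set the corresponding row of $\bm W^h$ to place weight $+1$ on the coordinate of $s_j$ inside the $j$-th block of the window and weight $0$ elsewhere, and set the matching entry of $\bv b^h$ to $-(2k+1) + \tfrac{1}{2}$. The pre-activation at position $t$ then equals the number of matched positions minus $(2k+1) - \tfrac12$, which is $+\tfrac12$ when the window equals $s$ and at most $-\tfrac12$ otherwise; being bounded away from $0$, scaling by $N$ and applying $\tanh$ sends it to $+1$ on a match and to $-1$ on a mismatch. To align the windows at the string boundaries with the grammar's padded grams, I would pad the CNN input with $k$ copies of the one-hot encoding of $\#$ at each end, so that the windows examined coincide exactly with the $(2k+1)$-grams the grammar checks.

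Next I would handle pooling and acceptance. Maxpool of a finite sequence of values each converging to $\pm 1$ converges to the maximum of the limits, so the pooled coordinate for gram $s$ tends to $+1$ iff $s$ occurs in the (padded) input and to $-1$ otherwise. Letting $r \le \abs{\Sigma \cup \{\#\}}^{2k+1}$ be the number of forbidden grams — a constant in $n$, so that this is a legitimate fixed $k$-CNN — I place weight $-1$ on each forbidden-gram coordinate of $\bm W^a$ and set $\bv b^a = -(r-1)$. When no forbidden gram appears the pre-sigmoid value is $+1$, and when at least one appears it is at most $-1$; the $N$-scaled sigmoid therefore tends to $1$ on accepted strings and to $0$ on rejected strings, so $\limN \nn^{N\theta} = \mathbbm{1}_L$, as required.

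The main obstacle is not any single construction but verifying that the asymptotic limits compose cleanly: I must keep every pre-activation bounded away from $0$ so that the $\tanh$ and $\sigma$ saturations are unambiguous, invoke continuity of maxpool to pass the limit through \eqref{eq:pooling}, and check the joint limit through the final sigmoid where $\bv h_+$ itself still depends on $N$. The second point of care is the boundary bookkeeping — choosing the padding so that precisely the grammar's set of $(2k+1)$-grams, including those straddling sentence edges, is the set of windows the network examines.
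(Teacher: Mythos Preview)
Your proposal is correct and follows essentially the same route as the paper: one filter per forbidden $(2k{+}1)$-gram that saturates to $+1$ on a match and $-1$ otherwise, maxpool to an existential bit, and a final affine layer that flips sign as soon as any forbidden bit is set. Your version is in fact more careful than the paper's, since you make the filter weights and biases explicit, verify the pre-activations are bounded away from $0$, and address boundary padding with $\#$ and the composition of limits through maxpool and the final sigmoid---points the paper's proof leaves implicit.
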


\begin{proof}
We construct a $k$-CNN to simulate a strictly $2k + 1$-local grammar. In the convolutional layer \eqref{eq:conv}, each filter identifies whether a particular invalid $2k + 1$-gram is matched. This condition is a conjunction of one-hot terms, so we use $\tanh$ to construct a linear transformation that comes out to $1$ if a particular invalid sequence is matched, and $-1$ otherwise.

Next, the pooling layer \eqref{eq:pooling} collapses the filter values at each time step. A pooled filter will be $1$ if the invalid sequence it detects was matched somewhere and $-1$ otherwise.

Finally, we decide acceptance \eqref{eq:ff} by verifying that no invalid pattern was detected. To do this, we assign each filter a weight of $-1$ use a threshold of $-K + \frac{1}{2}$ where $K$ is the number of invalid patterns. If any filter has value $1$, then this sum will be negative. Otherwise, it will be $\frac{1}{2}$. Thus, asymptotic sigmoid will give us a correct acceptance decision.
\end{proof}

\section{Neural Stack Lemmas}

Refer to \citet{hao2018context} for a definition of the StackNN architecture. The architecture utilizes a differentiable data structure called a \textit{neural stack}. We show that this data structure has $2^{\Theta(n)}$ state complexity.

\begin{lemma}[Neural stack state complexity] \label{thm:stack_state_complexity}
Let $\bm S_n \in \mathbb{R}^{nk}$ be a neural stack with a feedforward controller. Then,

\begin{equation*}
    \confs(\bm S_n) = 2^{\Theta(n)} .
\end{equation*}
\end{lemma}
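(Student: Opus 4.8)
The plan is to establish the two matching bounds separately. The upper bound $\confs(\bm S_n) = 2^{O(n)}$ is essentially immediate from \autoref{thm:general_state_complexity}: the argument there bounds the number of configurations of any hidden state by the number of distinct inputs, $\abs{\Sigma}^n$, and that counting argument never uses the fact that the state is a fixed-length vector. Since the neural stack $\bm S_n$ is just a (variable-length) hidden state computed from the one-hot input sequence, the same bound applies verbatim, giving $\confs(\bm S_n) \leq \abs{\Sigma}^n = 2^{O(n)}$.

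For the lower bound I would mirror the construction used for the attention encoder in \autoref{thm:attention_state_complexity}: exhibit a single choice of controller weights $\theta$ under which the stack records the entire input string, so that the configuration set already has $\abs{\Sigma}^n$ elements. Concretely, I would choose the feedforward controller so that, asymptotically, at every time step the push strength tends to $1$, the pop strength tends to $0$, and the value vector pushed is exactly the one-hot symbol $\bv x_t$. Because the strengths are sigmoid outputs and the value is a linear (or $\tanh$) map of $\bv x_t$, such a parameter choice exists: driving the relevant logits to $\pm\infty$ forces the gates to their discrete extremes. Under this setting the stack performs a pure push at each step and never pops, so after $n$ steps it holds $\bv x_1, \ldots, \bv x_n$ verbatim.

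With that construction in place, distinct input strings yield distinct limiting stack contents, so $\confs(\bm S_n^\theta) = \abs{\Sigma}^n$, and therefore $\confs(\bm S_n) = 2^{\Omega(n)}$. Combining the two bounds gives $\confs(\bm S_n) = 2^{\Theta(n)}$.

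The main obstacle I anticipate is verifying the asymptotic behavior of the stack update rather than the counting itself. I would need to check, from the precise update equations of \citet{hao2018context}, that when the push strength saturates to $1$ and the pop strength to $0$ the limit $\limN \bm S_n^{N\theta}$ genuinely preserves all previously pushed values — that is, earlier entries are neither overwritten nor have their strengths degraded — so that the map from input strings to limiting stack configurations is injective. Establishing this faithful, non-lossy push behavior in the limit is the crux; once it holds, the $\abs{\Sigma}^n$ lower bound and the matching upper bound close the argument.
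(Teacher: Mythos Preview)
Your proposal is correct and follows essentially the same approach as the paper: the upper bound is obtained directly from \autoref{thm:general_state_complexity}, and the lower bound comes from choosing controller parameters so that the network pops with strength $0$ and pushes $\bv x_t$ with strength $1$ at every step, yielding $\confs(\bm S_n^\theta) = \abs{\Sigma}^n$. The paper's version simply asserts the existence of such parameters without the additional discussion of saturating the gates or verifying faithfulness of the limiting stack update, but the substance is identical.
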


\begin{proof}
By the general state complexity bound \autorefp{thm:general_state_complexity}, we know that $\confs(\bm S_n) = 2^{O(n)}$. We now show the lower bound.

The stack at time step $n$ is a matrix $\bm S_n \in \mathbb{R}^{nk}$ where the rows correspond to vectors that have been pushed during the previous time steps. We set the weights of the controller $\theta$ such that, at each step, we pop with strength $0$ and push $\bv x_t$ with strength $1$. Then, we have
\begin{align}
    \confs(\bm S_n^\theta) &= \abs{\Sigma}^n \\
    \therefore \confs(\bm S_n) &= 2^{\Omega(n)} .
\end{align}
\end{proof}

\end{document}